\documentclass[11pt]{article}

\usepackage[margin=1in]{geometry}
\usepackage[hyphens]{url}
\usepackage{graphicx}
\usepackage{booktabs}
\usepackage{amsmath,amssymb,amsfonts}
\usepackage{amsthm}
\usepackage{array}
\usepackage{microtype}
\usepackage{xcolor}
\usepackage{tikz}
\usetikzlibrary{arrows.meta,fit,backgrounds,positioning}
\usepackage[numbers,sort&compress]{natbib}
\usepackage{hyperref}

\hypersetup{
  colorlinks=true,
  linkcolor=blue!55!black,
  citecolor=green!35!black,
  urlcolor=blue!60!black,
  pdftitle={ICM Out! Better Tournament Strategy from Computed Continuations,
    vs. Solvers and LLMs},
  pdfauthor={Boning Li and Longbo Huang}
}
\newtheorem{proposition}{Proposition}
\newtheorem{lemma}{Lemma}

\newtheorem{remark}{Remark}

\DeclareRobustCommand{\sco}{\textnormal{\textsc{SCO}}}
\newcommand{\icm}{\mathrm{ICM}}
\newcommand{\scont}{\mathrm{SC}}
\newcommand{\sym}{\mathrm{SYM}}
\newcommand{\btn}{\textsc{btn}}
\newcommand{\sbpos}{\textsc{sb}}
\newcommand{\bbpos}{\textsc{bb}}

\title{ICM Out! Better Tournament Strategy from Computed Continuations, vs.\ Solvers and LLMs}

\author{Boning Li\thanks{IIIS, Tsinghua University. Email: \texttt{li-bn22@mails.tsinghua.edu.cn}.}
\and Longbo Huang\thanks{IIIS, Tsinghua University. Email: \texttt{longbohuang@tsinghua.edu.cn}. Corresponding author.}}

\date{August 2026}

\begin{document}
\maketitle

\begin{abstract}
The Independent Chip Model (ICM) converts tournament chips into reference prize
equity, and policies are routinely constructed against those values.
Because ICM reads only stack sizes, it omits action order, blind obligations, and
seat rotation, and it does not price the elimination pressure a big stack puts on
the short stacks it can bust. Those omissions can alter the successor-state
contrasts that determine a move. We introduce \emph{Strategic-Continuation Optimization} (\sco),
a policy-construction method that enumerates current-hand outcomes, maps them to
successor states, prices those states with continuation values computed from the
finite tournament model, and optimizes and freezes the resulting current-hand
policy. The fixed-ICM comparison policy changes one thing only: the same optimizer
solves the same game with successor states priced by analytic ICM, so the two
policies differ only through that pricing. We evaluate the resulting policies
in a three-player jam/fold tournament with a \$1{,}000{,}000 prize pool. Relative
to the frozen strategic-continuation benchmark, analytic ICM has \$9{,}036 mean
absolute value error across all 2,838 state--seat entries. That value error rewrites
the ranges it prices: measured against each decision point's own fixed-ICM jam range,
\sco\ moves the jam frequency by an average of 14.08\%, and by 32.42\% at the button.
To price those different moves, we compare all 946 states and three policy owners
while changing only the focal policy and holding both opponents and the continuation
evaluator fixed. The policy produced by \sco\ earns \$214.33 more
prize equity per hand on average and is favored in 2,433 of 2,838 matched units.
The ordering survives replacing the solver-built opponent with two Large Language
Models (LLMs) and with a family of non-modeling threshold players, neither of which
has any access to the tournament computation.
This value-to-policy-to-cost chain shows directly when a stack-only equity model
becomes an inadequate objective for tournament strategy construction.

\end{abstract}

\section{Introduction}
\label{sec:intro}

A stack of tournament chips is not proportional to prize money. Doubling a stack
does not double what it returns, because prizes are paid by finishing rank rather
than by chip count: the leader's equity is capped by first prize, while a short
stack still holds a claim on the ladder. Players therefore need a way to translate
a stack of chips into the money it is expected to return. The Independent Chip Model (ICM)
is the standard tool for this translation: it treats each player's probability
of finishing first, second, and so on as proportional to chip share, and reads
off an expected prize from the payout
schedule~\citep{harville1973assigning,diaconis2022gambler}. Because it depends
on nothing but the vector of stacks, ICM is cheap to compute and has become the
default reference for late-tournament equity.

As a description, that is defensible. Chips really are the dominant
determinant of prize equity, and ICM captures the dominant part. The trouble is
that ICM values do not only describe a position; they \emph{drive decisions}. A
jam/fold player compares jamming against folding by looking up the continuation
equity ICM assigns to each outcome and takes the higher lookup. A model can be a
good summary of a state and still be a poor thing to optimize against, because
what a controller needs is not the level of the value but the \emph{differences}
between the values of the states it can move to.

We work in a finite three-player jam/fold tournament, small enough to enumerate
every stack configuration and to freeze and evaluate full-state policy tables,
and already carrying the structure a stack-only table discards: action order,
blind obligations, and seat rotation all shape what a position is worth. We
state every quantity in the money of a \$1{,}000{,}000 prize pool, the scale of a
major live final table. As in a cash game, position is worth money here, and ICM
cannot see any of it: a continuation table computed from the tournament itself
ranks the button above the small blind and the small blind above the big blind,
separating the button from the big blind by \$18{,}595 at equal 45-chip stacks.
Beyond position, ICM also fails to price the pressure of elimination. A chip
leader can jam without risking its own tournament life, while the short stack it
covers must call for everything or fold; ICM reads the same chip vector and prices
neither side of that threat. Our question is what that does to play:
\emph{when do ICM value errors change the move, and what do the changed moves
cost against solver and Large Language Model (LLM) opponents?}

We answer it with a policy-construction method and three measurements. The method,
\emph{Strategic-Continuation Optimization} (\sco), builds a policy by optimizing
current-hand play against a strategic continuation table computed from the
tournament itself. We then measure, in order, how far analytic ICM sits from a
frozen Strategic-Continuation (SC) benchmark across all state--seat entries; how far
the induced policy $\pi^{\sco}$ sits from $\pi^{\icm}$, built the same way against
analytic ICM values; and what that policy difference is worth in prize equity. The
third measurement is matched: for each state and focal policy owner we replace only
that player's policy, hold the opponents fixed, and score both arms with the same
continuation evaluator, so the paired difference in Prize Equity (PE), $\Delta PE$,
is attributable to the
focal policy and nothing else, across all 2,838 state--owner units of the complete
domain.

Across all 2,838 entries, analytic ICM minus the frozen SC benchmark has mean
absolute error \$9{,}036. Those prices rewrite the ranges they govern. Scaled
against each decision point's own fixed-ICM jam range, the induced jam frequency
moves by an average of 14.08\% across the 5,676 state--information-set cells, and by
32.42\% at the button. The matched cost is then direct: $\pi^{\sco}$ earns \$214.33
more per hand, and is favored in 2,433 of the 2,838 units, in 903 of the 946 states,
at every seat, and in every leave-one-out aggregate we can form. Averaging the
strategic table over the six seat assignments of each chip multiset splits that
gain exactly, with \$0 residual, into a \textbf{positional} term worth \$130.43
(60.86\%), the value of knowing which seat holds which stack, and a \textbf{level}
term worth \$83.90 (39.14\%), the value of pricing chips from the tournament itself
even for a table that never looks at seats. ICM misses both: what position is worth,
and how the tournament's own dynamics price a stack.

The result belongs to the play rather than to the machinery that measured it. Six
continuation evaluators, five computed independently of the table that produced
$\pi^{\sco}$, all keep the sign, as does a Monte Carlo evaluator that abandons the
frozen tables and deals three disjoint hands and a board from one shared deck. Two
LLMs supply opponent profiles built from rules and observable information alone, and
a seven-member family of non-modeling threshold opponents covers a wide sweep of
opponent aggression. Removing the continuation lookup entirely and playing the
tournament out to a finish, 20,000 paired replicates per unit over 496{,}434{,}460
three-way hands, raises the gain to \$938.03 per hand, on a pooled standard error
of \$18.33.

\paragraph{Contributions.}
\begin{enumerate}\itemsep2pt
\item We locate what ICM misses across the complete finite domain: its values sit
\$9{,}036 from the tournament's own continuation values in mean absolute error,
it prices neither the seat a stack sits in nor the elimination pressure a covering
stack applies, and the resulting jam ranges move by 14.08\% on average and by
32.42\% at the button.
\item We propose Strategic-Continuation Optimization, which optimizes current-hand
play against a continuation table computed from the tournament itself, and show
under a matched protocol that it earns \$214.33 per hand more than fixed ICM, rising
to \$938.03 per hand when the tournament is played out to a winner.
\item We split that gain exactly, with \$0 residual, into \$130.43 for knowing which
seat holds which stack and \$83.90 for pricing chips from the tournament's own
dynamics, and we hold the direction fixed across six continuation evaluators, a
shared-deck Monte Carlo evaluator, two LLM-driven opponent profiles, a non-modeling
opponent family, three chip depths, and two prize ladders.
\end{enumerate}

\section{Related Work}
\label{sec:related}

\paragraph{Tournament equity and ICM.}
ICM uses the sequential-ranking form associated with Harville's model to convert
stacks into payout-weighted finishing probabilities~\citep{harville1973assigning}.
Its mathematical behavior and empirical fit have been studied through
probabilistic and tournament-outcome
analyses~\citep{diaconis2022gambler,kim2025empirical,scott2024money}, while
strategy-side treatments of late-stage jam/fold
play~\citep{sklansky1994holdem,sklansky2007tournament,carter2007investigation}
take the equity model as given. That literature evaluates ICM as an
\emph{estimator} of prize equity. We evaluate it as a \emph{controller}: not how
close its numbers are to the truth, but how much prize equity is lost by acting
on them. The two questions come apart, because a controller is sensitive only to
differences between the values of reachable successors, and a model can be
accurate on average while systematically misordering the comparisons a decision
actually turns on.

\paragraph{Tournament computation.}
Prior work extends heads-up jam/fold
analysis~\citep{DBLP:conf/atal/MiltersenS07} to three-player fixed-point methods
over tournament-equity tables. Prior computations enumerate the 946 ordered stack
states and either compare equilibrium payoffs with ICM
predictions~\citep{ganzfried2008computing} or evaluate deviations through an
induced MDP~\citep{ganzfried2009computing}. Our experiment instead freezes the
policies, holds the evaluator and opponents fixed, replaces only the focal
owner's policy, and reports induced $\Delta PE$ over all 2,838 state--owner
units. Regret minimization and search have also produced strong agents in much
larger imperfect-information games~%
\citep{zinkevich2007regret,bowling2015heads,moravvcik2017deepstack,brown2018superhuman,brown2019superhuman},
including work that carries them past the two-player zero-sum case where the
guarantees hold~\citep{abou2010using,gibson2011regret,gibson2011strategy,%
szafron2013parameterized,games/GanzfriedNP18} and work that makes the underlying
solve faster or sharper~\citep{li2024rl,li2025efficient,li2026real,%
li2026correlatedchancesamplingmonte}. We contribute an evaluation
design and an empirical decomposition rather than a new solver.

\paragraph{Value functions as controllers.}
The distinction we draw is familiar outside poker. In approximate dynamic
programming the quantity that governs greedy-policy quality is not the accuracy
of a value function but the error in the \emph{advantages} it induces, and a
uniformly biased value function can be a perfect controller while a
small-average-error one is a poor one. Our seat/level decomposition is an
instance of that phenomenon with an unusually clean structure: because ICM is
invariant under the same seat-averaging operator applied to the strategic table, the loss splits
into the part attributable to an unmodeled state variable and the part
attributable to miscounting within the variables the model does see, with no
interaction residual.

\paragraph{Behavioral model opponents.}
Where prior work models an opponent to exploit
it~\citep{billings1998opponent,southey2005bayes,hoehn2005effective,%
li2026agentscertifyexploitsconfidencescheduled} or asks how
well a language model itself plays
poker~\citep{DBLP:journals/corr/abs-2308-12466,DBLP:journals/corr/abs-2401-06781,DBLP:journals/corr/abs-2309-17277,li2026pokerskill},
language models here instantiate two fixed opponent profiles built outside the
tournament computation. Each maps an information-restricted prompt carrying the
rules and the observable decision information to a jam/fold action tensor that
both primary hero arms then face identically. Prompts ask for an action, never for
equity, and reveal neither hero policy, which makes each profile an opponent the
two arms meet on equal terms rather than a source of values.

\section{Problem Setup and Estimand}
\label{sec:setup}

\paragraph{Tournament model.}
We study a finite three-player tournament whose \$1{,}000{,}000 prize pool pays
\$750{,}000, \$250{,}000, and \$0 for first, second, and third. All
equities and gains below are stated in that money, so \$1 is $10^{-6}$ of the
pool. A state $s=(a,b,c)$ records positive integer stacks at the button (\btn),
small blind (\sbpos), and big blind (\bbpos), with $a+b+c=T$. The main grid uses
$T=45$, blinds $1/2$, and all $\binom{44}{2}=946$ ordered states. The finite model uses a jam/fold action space and the standard 169 preflop
classes~\citep{gilpin2007lossless,sandholm2010state,li2026effective}, and positions rotate after
each hand. Every state and hand class in this model is enumerated exactly. Posted
blinds, folds, called jams, chip transfer, elimination, and seat rotation are
integrated into the current-hand payoff.

\paragraph{Continuation tables.}
A hand does not end the tournament; it produces a new stack configuration from
which play continues. A \emph{continuation table} $C$ captures the value of that
future in a single lookup: it maps a successor stack vector to the expected prize
money each surviving player will eventually collect. The two policy-construction
procedures use a continuation table in exactly this way and differ only in the
table used to solve their current-hand games.

ICM continuation, denoted $C^{\icm}$, is the Malmuth--Harville
table~\citep{harville1973assigning,malmuth1999gambling}. It assigns
finishing-order probabilities by drawing players out one at a time in proportion
to their chips: for stacks $(x_1,\dots,x_n)$ and payouts $(v_1,\dots,v_n)$,
\begin{equation}
C^{\icm}_i=\!\!\sum_{\text{orders }\rho}\!\Bigl(\textstyle\prod_{k}\frac{x_{\rho(k)}}{\sum_{\ell\ge k}x_{\rho(\ell)}}\Bigr)\,v_{\text{rank}_\rho(i)},
\label{eq:icm}
\end{equation}
summing over all finishing orders $\rho$. Equation~\eqref{eq:icm} reads the stack
vector and nothing else, so it is invariant to which seat holds which stack.
Strategic continuation, denoted $C^{\scont}$, is instead computed from the finite
tournament model itself, so it retains the relationship between stack ownership,
action order, and the blinds each seat must post next. The two tables agree that
more chips are better; they disagree about what a position is worth once seat and
order are taken into account. Throughout, $C^{\scont}$ is the frozen benchmark
inside this finite model. It incorporates the model's jam/fold dynamics, payouts,
blinds, and continuation convention; it is not an absolute ground-truth value for
richer tournament poker.

For an entry $(s,i)$, define benchmark-relative value error
\begin{equation}
e_i(s)=C_i^{\icm}(s)-C_i^{\scont}(s).
\label{eq:value-error}
\end{equation}
The census of $e_i(s)$ uses all 946 states and three absolute positions, with no
mask. Its absolute magnitude measures disagreement between two continuation
tables. It does not by itself determine a move.

\paragraph{When value error changes the move.}
At any information set and hand $h$, let
$m^{\scont}(h)=Q^{\scont}_{\rm jam}(h)-Q^{\scont}_{\rm fold}(h)$ and define the
contrast error
\begin{equation}
\delta(h)=\bigl(Q^{\icm}_{\rm jam}-Q^{\icm}_{\rm fold}\bigr)(h)-m^{\scont}(h).
\label{eq:contrast-error}
\end{equation}
For pure action selection away from ties, the preferred move differs exactly when
$m^{\scont}(h)$ and $m^{\scont}(h)+\delta(h)$ have opposite signs. Thus a common
level shift can be large and action-irrelevant, while a smaller successor-specific
error can reverse the pure preferred move by crossing the margin. Smoothed policies
can change their jam probability as the same contrast changes, without requiring a
sign crossing, and are especially sensitive near zero. Our policy
statistics therefore measure the induced difference directly rather than infer it
from $|e_i(s)|$. Aggregate value error, policy difference, and matched cost are
separate estimands, not a statewise monotonicity or mediation claim.

\paragraph{Two ways to be wrong, visible in the values.}
The cleanest illustration is a symmetric stack. Under Eq.~\eqref{eq:icm}, three
equal stacks price every seat at exactly \$333{,}333. Strategic continuation does
not, because the seats are not interchangeable in the next hand
(Table~\ref{tab:equal-stack}). The button posts no blind and can therefore fold
at no cost, so it is worth the most; the big blind is already partially committed,
so it is worth the least.

\begin{table}[tb]
\centering
\small
\begin{tabular}{@{}lcccc@{}}
\toprule
Equal-stack state & \btn\ (\$) & \sbpos\ (\$) & \bbpos\ (\$) & mean (\$) \\
\midrule
$T=30$: $(10,10,10)$ & 347{,}328 & 337{,}450 & 315{,}220 & 333{,}333 \\
$T=45$: $(15,15,15)$ & 341{,}568 & 335{,}458 & 322{,}973 & 333{,}333 \\
$T=60$: $(20,20,20)$ & 339{,}875 & 334{,}080 & 326{,}043 & 333{,}333 \\
\midrule
ICM (any depth) & 333{,}333 & 333{,}333 & 333{,}333 & 333{,}333 \\
\bottomrule
\end{tabular}
\caption{Strategic continuation equity at equal stacks, in dollars of a
\$1{,}000{,}000 prize pool. The strict $\btn>\sbpos>\bbpos$ ordering holds at all
three depths, and at 45 chips the button and big blind are \$18{,}595 apart. The
seat mean coincides with ICM here \emph{because the state is symmetric}: the
three seats exhaust one chip multiset, so averaging them recovers the ICM value.}
\label{tab:equal-stack}
\end{table}

The last column of Table~\ref{tab:equal-stack} carries a second message. At a
symmetric state the three seat values average to exactly the ICM value, so seat
blindness is the \emph{only} error there. Symmetry is what makes that average
work, and Appendix~\ref{app:sym-equal} shows it holds at this state alone. At a general state the six seat assignments of a chip multiset are
different games, and their strategic average need not equal Eq.~\eqref{eq:icm}.
Section~\ref{sec:decomposition} constructs that average explicitly and finds
that its comparison with ICM contributes \$83.90 per hand along the chosen
symmetrization path: even a seat-blind strategic table beats ICM. This is the
second table difference, and equal-stack intuition hides it.

\paragraph{Policy arms and estimand.}
Strategic-Continuation Optimization (\sco) constructs a policy by optimizing the
current-hand game against a strategic continuation table. Its frozen output
$\pi^{\sco}$ is produced against $C^{\scont}$. The comparison policy
$\pi^{\icm}$ uses the same current-hand optimization with analytic $C^{\icm}$ held
fixed. Both policies are frozen before evaluation; in particular, $\pi^{\sco}$ is
not adapted to either language model. We call these the two primary policy arms
when they are inserted as the focal policy in the matched evaluation.

For state $s$, focal owner $i$, opponent profile $o_{-i}$, and common evaluation
continuation $C$, define
\begin{equation}
\Delta PE(s,i;o,C)=
U_i(\pi_i^{\sco},o_{-i};C)-
U_i(\pi_i^{\icm},o_{-i};C),
\label{eq:delta-pe}
\end{equation}
where $U_i$ is expected tournament prize money after exactly the current hand and
continuation lookup. A positive value means that replacing the focal owner's
fixed-ICM policy with the policy $\pi^{\sco}$ produced by \sco\ earns more prize equity. The opponents,
evaluator, and continuation convention are identical across the two terms, so the
difference isolates the focal policy.

\paragraph{A current-hand optimization benchmark.}
The matched gain can also be located relative to a common current-hand optimum.
Let $\Pi_i^{\rm hand}$ denote the focal player's behavioral strategies in the
modeled current jam/fold hand. Opponents and the continuation evaluator remain
fixed, and no future-state policy is re-optimized.

\begin{proposition}[Current-hand optimization boundary]
\label{prop:boundary}
Fix $(s,i,o,C)$ and suppose $\pi_i^{\sco}$ is $\varepsilon$-optimal over
$\Pi_i^{\rm hand}$ under $C$ against $o_{-i}$:
\begin{equation}
U_i(\pi_i^{\sco},o_{-i};C)\geq
\sup_{\pi_i\in\Pi_i^{\rm hand}}U_i(\pi_i,o_{-i};C)-\varepsilon.
\label{eq:optimization-boundary}
\end{equation}
Then $\Delta PE(s,i;o,C)\geq-\varepsilon$ for every comparison policy
$\pi_i^{\icm}\in\Pi_i^{\rm hand}$, and exact optimality gives
$\Delta PE\geq0$. Define the current-hand optimization gap
\begin{equation}
g_i(\pi_i;s,o,C)=
\sup_{\pi_i'\in\Pi_i^{\rm hand}}U_i(\pi_i',o_{-i};C)
-U_i(\pi_i,o_{-i};C).
\label{eq:hand-gap}
\end{equation}
Then
\begin{equation}
\Delta PE(s,i;o,C)=g_i(\pi_i^{\icm};s,o,C)-g_i(\pi_i^{\sco};s,o,C).
\label{eq:ceiling}
\end{equation}
\end{proposition}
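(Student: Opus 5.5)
The plan is to work directly from the definitions in Eqs.~\eqref{eq:delta-pe} and~\eqref{eq:hand-gap}; no earlier result is needed beyond the fact that both arms are evaluated against the same $(o_{-i},C)$. Write $V^\ast=\sup_{\pi_i\in\Pi_i^{\rm hand}}U_i(\pi_i,o_{-i};C)$. First I will note that $V^\ast$ is finite. The modeled hand is finite: there are finitely many hand classes and actions, and the continuation table $C$ takes values in $[0,\$1{,}000{,}000]$. So $U_i$ is a bounded (indeed multilinear) function of the focal behavioral strategy, and the supremum is a real number. This is the only point that needs care, because it makes all subtractions of $V^\ast$ legitimate and rules out an $\infty-\infty$ expression in Eq.~\eqref{eq:ceiling}.

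For the lower bound, I take any comparison policy $\pi_i^{\icm}\in\Pi_i^{\rm hand}$. By the definition of the supremum, $U_i(\pi_i^{\icm},o_{-i};C)\le V^\ast$. Combining this with the hypothesis~\eqref{eq:optimization-boundary}, $U_i(\pi_i^{\sco},o_{-i};C)\ge V^\ast-\varepsilon$, gives $\Delta PE(s,i;o,C)\ge (V^\ast-\varepsilon)-V^\ast=-\varepsilon$. Setting $\varepsilon=0$ yields the exact-optimality case $\Delta PE\ge 0$. I will also remark that $\pi_i^{\sco}$ must itself lie in $\Pi_i^{\rm hand}$ for the gap $g_i(\pi_i^{\sco};\cdot)$ to be meaningful. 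This holds because both arms are current-hand jam/fold behavioral strategies produced by the same optimizer.

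For the identity~\eqref{eq:ceiling}, I expand both gaps using Eq.~\eqref{eq:hand-gap}:
\[
g_i(\pi_i^{\icm})-g_i(\pi_i^{\sco})=\bigl(V^\ast-U_i(\pi_i^{\icm},o_{-i};C)\bigr)-\bigl(V^\ast-U_i(\pi_i^{\sco},o_{-i};C)\bigr).
\]
The $V^\ast$ terms cancel because both gaps are taken under the same $(s,o,C)$, which leaves exactly the right-hand side of Eq.~\eqref{eq:delta-pe}. As a consistency check, this form recovers the first claim, since $g_i(\pi_i^{\icm})\ge 0$ and $g_i(\pi_i^{\sco})\le\varepsilon$.

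I do not expect a real obstacle here. The argument is bookkeeping, and the only substantive step is justifying that $V^\ast$ is finite and that both policies belong to the common strategy class $\Pi_i^{\rm hand}$ with fixed opponents and a fixed evaluator.
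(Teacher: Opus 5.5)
Your proof is correct and follows the paper's argument: feasibility of $\pi_i^{\icm}$ in the supremum combined with the $\varepsilon$-optimality hypothesis gives $\Delta PE\ge-\varepsilon$, and the common supremum cancels in the difference of the two gaps. Your extra check that the supremum is finite, which rules out an $\infty-\infty$ cancellation, is a point the paper leaves implicit.
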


\begin{proof}
$\pi_i^{\icm}$ is feasible in the supremum in
Eq.~\eqref{eq:optimization-boundary}, so
$U_i(\pi_i^{\icm},o_{-i};C)\le
\sup_{\pi_i\in\Pi_i^{\rm hand}}U_i(\pi_i,o_{-i};C)\le
U_i(\pi_i^{\sco},o_{-i};C)+\varepsilon$, which is the sign claim. For the
identity, both gaps use the same supremum, which cancels in their difference.
\end{proof}

Under the continuation that defines this current-hand game, the optimization
boundary explains the sign up to the solve tolerance; magnitude and coverage
remain empirical. Changing $C$ changes the scored game and voids that boundary,
which identifies the intervention tested in Appendix~\ref{app:robustness}.

\section{Method}
\label{sec:method}

\paragraph{Estimating a player's prize equity.}
The heart of the comparison is a single quantity: the expected prize a player
takes away from a state, playing one hand and then continuing. For the census, we
compute it by deterministic aggregation over a frozen discrete payoff table for
every state, and Figure~\ref{fig:method} shows the four steps.

\paragraph{From a state to a current hand.}
A state fixes the three stacks and the seats. Each seat is dealt one of the 169
preflop hand classes and chooses, according to its policy, whether to jam all-in
or fold. Seats too short to cover a blind are treated as already committed, so
the hand resolves under the true blind structure.

\paragraph{From decisions to terminal outcomes.}
The jam/fold decisions of the three seats generate a distribution over terminal
outcomes. If everyone folds to a blind, chips move by the posted amounts. If one
player jams and the others fold, the jammer takes the posted blinds uncontested.
If exactly one player jams and is called, the two contest a heads-up all-in and
the third keeps its remaining stack. If more players commit, the hand becomes a
two- or three-way all-in that is settled with a main pot and, when stacks are
unequal, a side pot. Each outcome is weighted by the hand-class reach
probabilities and the policy probabilities that produced it.

\paragraph{From terminal chips to prize shares.}
Every terminal outcome leaves a successor stack vector $s'$, and the continuation
table prices it. When all three players survive, $s'$ is looked up directly in the
three-player table. When one player is eliminated, that player collects the
third-place prize and the two survivors are priced by the heads-up table. When two
players bust in the same hand, the survivor takes first prize and the two
eliminated seats are ranked by their stacks at the start of the hand. This step is where a continuation table prices terminal outcomes. During policy
construction the two arms use different tables; during matched scoring both arms
use the same frozen $C^{\scont}$, so the focal policy rows are the only difference.

\paragraph{From outcomes to expected equity.}
Summing the priced outcomes against their reach and policy weights gives $U_i$,
the prize money player $i$ expects after exactly one hand and one continuation
lookup. This is the quantity in Eq.~\eqref{eq:delta-pe}: a current-hand
expectation in which the continuation table stands in for all future hands.
Section~\ref{sec:rollout} relaxes exactly this by playing the tournament out to a
finish.

\paragraph{A worked case.}
Consider the equal-stack state $(15,15,15)$. After any no-elimination outcome,
positions are re-indexed: the current button becomes the next big blind, the
current small blind becomes the next button, and the current big blind becomes the
next small blind. Thus a successor written in current-hand order as
$(x'_{\btn},x'_{\sbpos},x'_{\bbpos})$ is looked up in next-hand order as
$(x'_{\sbpos},x'_{\bbpos},x'_{\btn})$. ICM assigns one value to a stack multiset
however it is assigned; strategic continuation distinguishes which player inherits
each next-hand position. Called-jam branches settle the showdown and any
elimination before this lookup, and the matched comparison evaluates both policies
under the same frozen continuation.

\begin{figure*}[t]
\centering
\newcommand{\scoredhand}[4]{%
  \begin{scope}[shift={(#1,#2)}]
    \draw[draw=black!30, fill=green!3, line width=0.5pt, rounded corners=4.4mm]
      (-0.92,-0.46) rectangle (0.92,0.46);
    \draw[draw=black!25, densely dotted, line width=0.4pt] (0,0.02) circle (0.13);
    \node[seat, draw=black!45, fill=black!12] at (-0.56,-0.20) {};
    \node[seat, draw=black!45, fill=black!12] at ( 0.56,-0.20) {};
    \node[seat, draw=#4, fill=#3, font=\tiny] at (0,0.26) {$i$};
  \end{scope}}
\begin{tikzpicture}[
  font=\small,
  >={Stealth[length=2.2mm]},
  x=1cm, y=1cm,
  cell/.style={line width=0.4pt, minimum width=4.2mm, minimum height=5.2mm,
               inner sep=0pt},
  seat/.style={circle, line width=0.5pt, minimum size=3.4mm, inner sep=0pt},
  note/.style={font=\footnotesize, text=black!60, align=center},
  colname/.style={font=\tiny, text=black!55},
  blk/.style={anchor=west, align=left, inner sep=0pt},
  stage/.style={font=\footnotesize\itshape, text=black!55},
  flow/.style={->, draw=black!50, line width=0.5pt},
]

\def\ysc{0.92}
\def\yicm{-0.92}

\node[stage] at (-6.05,1.90) {solve against};
\node[stage] at (-2.95,1.90) {induced range};
\node[stage] at (0.60,1.90)  {score, matched};
\node[stage] at (4.05,1.90)  {compare};

\node[colname] at (-7.90,1.48) {\btn};
\node[colname] at (-7.48,1.48) {\sbpos};
\node[colname] at (-7.06,1.48) {\bbpos};

\foreach \x/\c in {-7.90/orange!45, -7.48/orange!22, -7.06/orange!7}
  \node[cell, draw=orange!70!black, fill=\c] at (\x,\ysc) {};
\node[blk] (tsc) at (-6.77,\ysc)
  {$C^{\scont}$\\[1.5pt] {\footnotesize\color{black!60}seats, order, blinds}};
\node[inner sep=0.6pt, draw=orange!70!black, line width=0.5pt] (rsc)
  at (-2.95,\ysc) {\includegraphics[width=11.2mm]{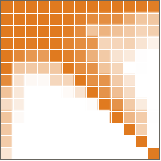}};
\node[note] at (-2.95,0.16) {$\pi^{\sco}$: 34.2\% jammed};

\foreach \x in {-7.90,-7.48,-7.06}
  \node[cell, draw=blue!55!black, fill=blue!14] at (\x,\yicm) {};
\node[blk] (ticm) at (-6.77,\yicm)
  {$C^{\icm}$\\[1.5pt] {\footnotesize\color{black!60}stacks only}};
\node[inner sep=0.6pt, draw=blue!55!black, line width=0.5pt] (ricm)
  at (-2.95,\yicm) {\includegraphics[width=11.2mm]{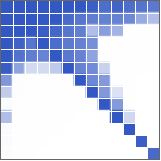}};
\node[note] at (-2.95,-1.68) {$\pi^{\icm}$: 27.9\% jammed};

\scoredhand{0.60}{\ysc}{orange!45}{orange!70!black}
\scoredhand{0.60}{\yicm}{blue!30}{blue!55!black}
\node[inner sep=0pt, minimum width=18.4mm, minimum height=9.0mm] (hsc)
  at (0.60,\ysc) {};
\node[inner sep=0pt, minimum width=18.4mm, minimum height=9.0mm] (hicm)
  at (0.60,\yicm) {};
\begin{scope}[on background layer]
  \node[fit=(hsc)(hicm), inner sep=5pt, rounded corners=2pt,
        fill=black!6, draw=black!22, line width=0.4pt] (band) {};
\end{scope}
\node[note] at (0.60,-2.10)
  {identical opponents and payoffs,\\[0.5pt] both priced by $C^{\scont}$};

\node[draw=black!55, circle, inner sep=1.2pt, font=\footnotesize] (mi) at (2.60,0)
  {$-$};
\node[draw=black!65, fill=black!3, rounded corners=1.5pt, line width=0.7pt,
      minimum width=19mm, minimum height=9mm] (d) at (4.40,0) {$\Delta PE(s,i)$};

\draw[flow] (tsc.east)  -- (rsc.west);
\draw[flow] (ticm.east) -- (ricm.west);
\draw[flow] (-2.32,\ysc)  -- (band.west |- hsc);
\draw[flow] (-2.32,\yicm) -- (band.west |- hicm);
\draw[flow] (band.east |- hsc)  -- ++(0.18,0) |- (mi.135);
\draw[flow] (band.east |- hicm) -- ++(0.18,0) |- (mi.225);
\draw[flow] (mi.east) -- (d.west);

\end{tikzpicture}
\caption{The two primary arms are solved against their own continuation table, then frozen and scored
alike. At the illustrated equal-stack state, the strategic table (orange) prices
the seats differently while analytic ICM (blue) repeats the same value across the
three seats; in general, ICM is invariant to seat relabeling for a fixed stack
assignment, not equal across unequal stacks. The charts are the
induced button jam ranges at $(15,15,15)$ on the $13\times13$ preflop grid, shaded
by jam probability. Their difference illustrates the middle link of the evidence
chain: continuation errors affect action through jam--fold successor contrasts; pure
action reversal requires a zero crossing, while smoothed probabilities can change
without one. Only the focal owner $i$ changes between
the two scored hands, so
$\Delta PE(s,i)$ isolates its policy. Note that collapsing the orange row to its
own average would produce a third table, seat-blind but not equal to ICM ---
that table is the object of Section~\ref{sec:decomposition}.}
\label{fig:method}
\end{figure*}

\paragraph{Strategic-Continuation Optimization.}
Strategic-Continuation Optimization (\sco) maps a strategic continuation table to
a current-hand policy: for every full state, it solves the jam/fold game whose
terminal outcomes are priced by that table, then freezes the resulting policy. In
our finite model, the strategic table $C^{\scont}$ is supplied by the established
Value-Iteration--Fictitious-Play (VI--FP) machinery of
\citet{ganzfried2009computing}: each outer value-iteration sweep solves every
current-hand game with smoothed fictitious play~\citep{brown1951iterative,leslie2006generalised}
and updates the continuation table. The baseline application of \sco\ to the
archived $C^{\scont}$ produces $\pi^{\sco}$. The comparison branch uses the same
current-hand optimization while holding analytic ICM continuation fixed,
producing $\pi^{\icm}$. Both optimizations range over the modeled current-hand
policy space with the other players and continuation values fixed.

The fixed-ICM current-hand solves use a \$250 inner tolerance over all 946 states
and all three policy owners. The strategic outer computation uses a \$500 local
acceptance level; its frozen table has an ex-post worst one-hand deviation gain of
\$364.68, while the fixed-ICM arm attains \$250.00. Section~\ref{sec:scale}
re-solves both arms in lockstep at three common tolerances, which is the direct
test of whether the gain is really shared residual.

\paragraph{Matched evaluation.}
The comparison must be matched, or a payoff difference could come from the scoring
rather than the policy. For every state $s$ and focal owner
$i\in\{\btn,\sbpos,\bbpos\}$, we construct two profiles that are identical except
in the focal owner's rows. One uses $\pi_i^{\sco}$ and the other uses
$\pi_i^{\icm}$; all rows owned by the other two players remain at the same frozen
$\pi^{\sco}$ anchor. Both profiles are then evaluated with the \emph{same}
strategic continuation $C^{\scont}$. Fixing the opponents and the evaluator means
the paired difference reflects only the focal owner's policy, which is exactly the
object Eq.~\eqref{eq:delta-pe} defines. Sweeping every state and owner produces
$946\times3=2{,}838$ paired $\Delta PE$ units.

\paragraph{Descriptive policy difference.}
Before pricing the policies, we describe how often their moves differ. For a binary
jam/fold row, Total Variation (TV) is the absolute jam-probability difference,
$\operatorname{TV}_{\rm bin}=|p_{\sco}(\mathrm{jam})-
p_{\icm}(\mathrm{jam})|$. The primary policy statistic averages this quantity
equally over states and information-set rows, using the unconditional uniform
prior over the 1,326 private-card combinations. Forced rows remain in the average.
The statistic asks how different the frozen tables are on their common indexed
domain.

Absolute percentage points understate how much of a jam range is rewritten, because
a decision point that jams 5\% of combinations and one that jams 80\% are put on the
same scale. We therefore also report the difference relative to each decision
point's own fixed-ICM jam range,
\begin{equation}
X(s,u)=\frac{|p_{\sco}(\mathrm{jam}\mid s,u)-p_{\icm}(\mathrm{jam}\mid s,u)|}
{p_{\icm}(\mathrm{jam}\mid s,u)},
\label{eq:relative-jam}
\end{equation}
averaged equally over the $946\times6=5{,}676$ state--information-set cells. Every
denominator is strictly positive here, the smallest fixed-ICM jam rate in the domain
being 0.097\%, so the average needs no truncation. We
also report the mean absolute cellwise difference in combo-prior marginal jam
rate, a uniform
169-hand-class version, near-pure opposite-action mass, and breakdowns by owner and
information set.

Three properties of this design fix what the census measures. There is no
sampling: the 2,838 units are the complete population at $T=45$, so the coverage
counts we report are census facts. The two arms share the settle logic bit for
bit, so identical policy rows produce identical values --- a property we assert as
a test rather than assume. And the anchor is the same in both terms, so the design
does not credit $\pi^{\sco}$ for facing a weaker opponent;
Section~\ref{sec:opponents} replaces the anchor outright to vary \emph{which}
opponent the two arms share.

\paragraph{What differs from ICM, and why it helps.}
The two arms part ways at a single point. The solver, the one-hand game, the
showdown arithmetic, and the evaluator are all shared. The only difference is the
continuation table each policy is solved against: $\pi^{\icm}$ prices successor
stacks with Eq.~\eqref{eq:icm}, while $\pi^{\sco}$ prices them with the
strategic table.

That single difference has a concrete consequence for play. ICM's price of a
successor depends only on the chip vector, so it cannot distinguish a player about
to post the big blind from one acting on the button with the same stack. Two spots
ICM treats as identical differ in continuation value once the next hand's blinds
and action order are counted, yet a policy optimizing the ICM price makes the same
jam/fold choice in both. This is why a seat-aware table beats a seat-blind one.
Whether ICM is even the right seat-blind table is a separate question, and
Section~\ref{sec:decomposition} shows it is not.

\section{The Complete Census}
\label{sec:census}

Every dollar quantity is stated in the money of the \$1{,}000{,}000 prize pool.
The census first records disagreement between analytic ICM and the frozen SC
benchmark, then the induced policy difference, and finally the matched gain
$\pi^{\sco}$ minus $\pi^{\icm}$.

\paragraph{Three-layer evaluation of \sco.}
The census reads \sco\ at three layers. The value layer measures
$C^{\icm}-C^{\scont}$ over all 2,838 state--seat entries and records disagreement
with the frozen finite-model benchmark. The policy layer measures binary total
variation between the two frozen tables, weighted equally over states and
information sets under the unconditional 1,326-combo prior. The cost layer changes
only the focal policy, holds the opponents and the $C^{\scont}$ evaluator fixed, and
prices the resulting play. A value discrepancy reverses a pure preferred move only
when its decision-relevant successor contrast crosses zero, and smoothed
probabilities can change without reversal, so the three layers have to be read in
sequence rather than substituted for one another.

The value-error distribution is substantial but heterogeneous. Seat-specific Mean
Absolute Error (MAE) is \$9,127.43 at the button, \$5,907.48 at the small blind, and \$12,073.13 at the
big blind. The corresponding policy difference is largest for button-owned rows:
combo-prior mean binary TV is 7.5164 percentage points for the button, 5.6198 for
the small blind, and 3.5013 for the big blind. Scaled by each decision point's own
fixed-ICM jam range, the same ordering sharpens, because the button also holds the
tightest fixed-ICM range at 32.5176\% of combinations: the mean relative jam-range
change of Eq.~\eqref{eq:relative-jam} is 32.4161\% at the button, against 10.8235\%
and 10.1351\% at the two blinds, for 14.0781\% overall. The relative change exceeds
20\% at 21.63\% of decision points and 10\% at 43.18\% of them. Trimming does not
carry the average: restricting to cells whose fixed-ICM jam rate is at least 2\%
leaves 14.0609\%, and dropping the largest one percent of relative changes leaves
12.2429\%. Uniform weighting over the 169 hand
classes gives 4.96 points of absolute TV overall, and 0.72\% of combo mass takes near-pure
opposite actions, using jam probabilities below 0.05 versus above 0.95. The six
information-set means range from 3.0319 to 7.5164 percentage points; Appendix
\ref{app:details} reports each row and the released decision-chain artifact stores
the unrounded values.

\paragraph{Headline matched cost.}
Across all 2,838 state--owner units, $\pi^{\sco}$ improves mean prize equity by
\$214.33 per hand. The comparison favors it in 2,433 of the 2,838 units; 277 units
are negative and 128 are exactly zero, the latter being states where the two
policies coincide on every hand class. Aggregating owners within a state first,
903 of the 946 states favor it. On the previously defined stable subset the mean
rises to \$262.80 with 1,151 of 1,314 units positive (Table~\ref{tab:census}).

\begin{table}[tb]
\centering
\footnotesize
\setlength{\tabcolsep}{4pt}
\begin{tabular}{@{}lrrrr@{}}
\toprule
Scope & Units & Mean (\$) & Median (\$) & $\Delta PE>0$ \\
\midrule
All units & 2,838 & 214.33 & 76.08 & 2{,}433 \\
By state & 946 & 214.33 & --- & 903 \\
Stable subset & 1,314 & 262.80 & 77.70 & 1{,}151 \\
\midrule
Button & 946 & 291.03 & 95.25 & 845 \\
Small blind & 946 & 237.63 & 77.85 & 826 \\
Big blind & 946 & 114.33 & 63.25 & 762 \\
\bottomrule
\end{tabular}
\caption{Current-hand prize-equity gain of the frozen policy $\pi^{\sco}$ produced by \sco\ over $\pi^{\icm}$,
in dollars of a \$1{,}000{,}000 prize pool. A unit is a state--owner pair, and the
2,838 units are the complete population at $T=45$. The stable row uses the frozen
canonical mask defined in Appendix~\ref{app:protocol}.}
\label{tab:census}
\end{table}

No single state or seat carries the result. Dropping any one state leaves a mean
between \$200.80 and \$214.58; dropping any one seat leaves \$175.98 (omit button),
\$202.68 (omit small blind), or \$264.33 (omit big blind). Per-seat means are
\$291.03, \$237.63, and \$114.33 --- largest for the seat whose next-hand position
moves the most and smallest for the already-committed big blind, which is the
pattern the missing positional information predicts.

\paragraph{The mean is small; the underlying gap is not.}
Taken alone, \$214.33 sounds negligible: it is 0.0257\% of the \$333{,}333 an equal
three-way seat is worth. Two features of the distribution explain why the average
understates the phenomenon.

First, the distribution is strongly right-skewed. The median gain is \$76.08,
roughly a third of the mean, so most states improve slightly while a minority
carries the average. The largest single unit gains \$36{,}944.30; the worst loses
\$134.48. That asymmetry --- an upper tail nearly three hundred times the size of
the worst case --- is itself informative: when ICM's blindness matters, it matters
by orders of magnitude more than when it helps. Second, the census average is
taken over a domain deliberately including the positionally mild states. At
$(15,15,15)$ the button and big blind differ by \$18{,}595 under strategic
continuation, about 87 times the mean gain. The average is small because most
states are positionally mild, not because the gap is.

Sorting states by chip-share spread gives a descriptive profile.

\begin{table}[tb]
\centering
\footnotesize
\setlength{\tabcolsep}{4pt}
\begin{tabular}{@{}lrrr@{}}
\toprule
Stack-spread quartile & States & Mean (\$) & $\Delta PE>0$ \\
\midrule
Q1 (most even) & 232 & 92.35 & 656 / 696 \\
Q2 & 237 & 149.38 & 628 / 711 \\
Q3 & 222 & 159.20 & 552 / 666 \\
Q4 (most lopsided) & 255 & 433.65 & 597 / 765 \\
\bottomrule
\end{tabular}
\caption{Gain by quartile of chip-share spread ($\max-\min$ of the three stack
shares). The mean rises almost fivefold from the most even to the most lopsided
quartile while coverage falls, so lopsided states are where the gain concentrates
and also where it is least uniform.}
\label{tab:quartiles}
\end{table}

The two columns move in opposite directions, and the tension is the point. Even
stacks give a small but very reliable gain: nearly every unit improves, by a
little. Lopsided stacks give a large but less reliable one. Averaged over the
census these combine into a mean driven by Q4 and a coverage count driven by
Q1--Q2. The 277 negative units also have above-average spread (mean spread
$0.518$ versus $0.483$ census-wide), so the losses sit where the gain is largest.

\paragraph{Location relative to a common current-hand optimum.}
Equation~\eqref{eq:ceiling} locates the direct matched gain relative to the same
current-hand unilateral optimum for both policies. Across all 2,838 units, the
fixed-ICM policy has a mean current-hand optimization gap of \$378.73, while the
SCO policy's mean gap is \$164.40. Their difference is the directly measured
\$214.33 matched gain. The identity holds numerically to
$\$7.5\times10^{-10}$ per unit.

\begin{table}[tb]
\centering
\footnotesize
\setlength{\tabcolsep}{4pt}
\begin{tabular}{@{}lrrr@{}}
\toprule
Scope & ICM gap (\$) & SCO gap (\$) & Matched gain (\$) \\
\midrule
All 2,838 units & 378.73 & 164.40 & 214.33 \\
\bottomrule
\end{tabular}
\caption{Mean optimization gaps to a common current-hand unilateral optimum and
their difference. Opponents and $C^{\scont}$ are fixed, and optimization ranges
over the focal player's modeled current-hand policy.}
\label{tab:share}
\end{table}

The common optimum places the two frozen policies on one scale: under the scorer
and opponents used in the census, the SCO policy sits closer to the same local
benchmark, and the distance between them is the \$214.33 matched gain itself.

\paragraph{Does the scoring machinery produce the result?}
The census is scored with a frozen 169-class continuation table, so we also
scored it with an evaluator that shares none of that machinery: three disjoint
hole-card pairs and a five-card board dealt from one shared 52-card deck, 10,000
legal deals per state on 120 confirmatory states, which removes the independence
approximation implicit in class-level tabulation. It agrees with the frozen-table
evaluator to $-\$3.65$ per state on a standard error of \$12.25, and gives \$171.18
per hand with 96 of 120 states positive and a state-cluster bootstrap interval of
$[\$131.85,\$211.95]$. Five further continuation endpoints, computed from different
initializations independently of the table that produced $\pi^{\sco}$, all keep the
sign and the per-seat ordering, within $1.00$ to $1.13$ times the published mean.
Appendix~\ref{app:robustness} reports these and the remaining scorer and anchor
interventions in full, including the one cell that reverses the sign, where both
arms are scored with analytic ICM and $\pi^{\icm}$ is then the policy optimized for
the scored game.

\begin{figure*}[tb]
\centering
\includegraphics[width=\textwidth]{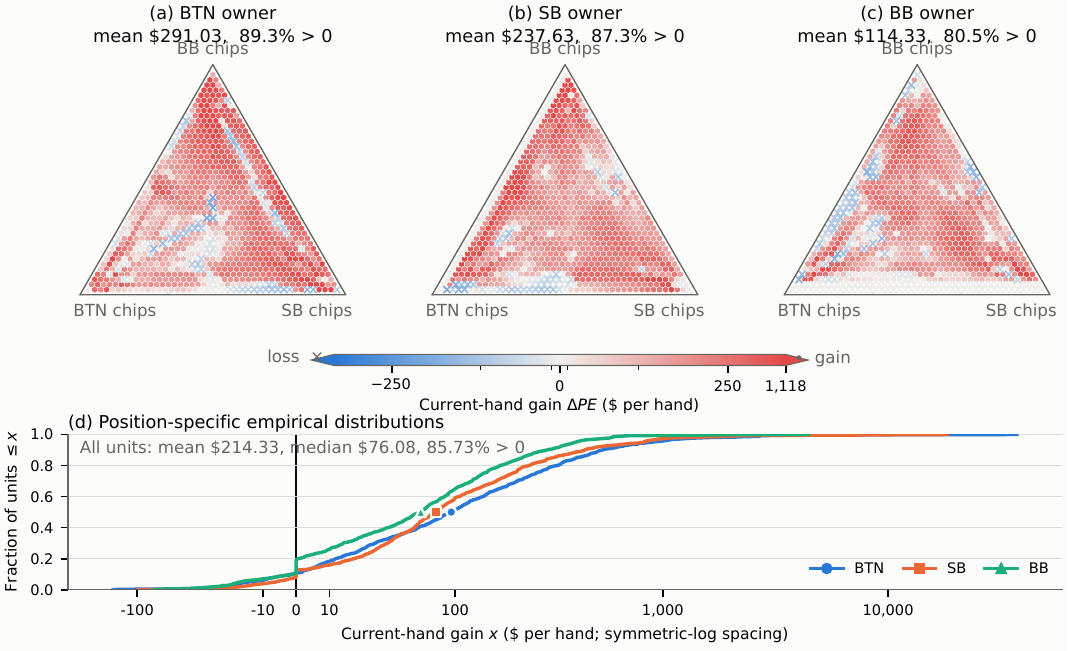}
\caption{The complete census, per state and per seat. Panels (a)--(c) place each
of the 946 chip triples at 45 total chips on the simplex whose corners are all
chips to the button, the small blind, and the big blind, colored by the gain to
the seat that owns the compared rows. Round markers show gains and crosses show
losses, so the sign reads without color, and the color scale is symmetric-log to
keep the small losses visible beside the right-skewed positive tail. The color
endpoints are the 98th percentile of $|\Delta PE|$ rather than the extremes: 57
of the 2,838 units saturate the scale, and the arrowheads mark that clipping.
Panel (d)
gives the three per-seat empirical distributions on the same symmetric-log axis
with each median marked; the vertical rule is zero. The losses are sparse, small,
and concentrated toward the lopsided edges of the simplex, while the large gains
sit in the interior bands where a seat's next-hand position is most in play.}
\label{fig:census}
\end{figure*}

\section{Where the Gain Comes From: An Exact Table Decomposition}
\label{sec:decomposition}

Section~\ref{sec:census} establishes three facts: ICM disagrees with the frozen
benchmark in value, induces a different policy, and loses \$214.33 per hand in the
matched comparison. This section splits that cost by a controlled table
intervention. The standard story is that ICM cannot see seats. A second table
difference remains after seat symmetrization, and the decomposition below shows
that both corresponding aggregate terms are positive along the chosen path.

\paragraph{An exact split.}
Let $M(s)$ be the multiset of chip counts in state $s$, and let $\Sigma(s)$ be the
six ordered states obtained by assigning those chips to the three seats. Define
the \emph{seat-symmetrized} table
\begin{equation}
C^{\sym}(s) \;=\; \frac{1}{|\Sigma(s)|}\sum_{s'\in\Sigma(s)} \Pi_{s'\to s}\,C^{\scont}(s'),
\label{eq:sym}
\end{equation}
where $\Pi_{s'\to s}$ realigns the seat indices of $s'$ onto those of $s$. By
construction $C^{\sym}$ depends only on $M(s)$: it is as seat-blind as ICM. But its
numbers are computed from the tournament, not from Eq.~\eqref{eq:icm}. Solving a
third policy arm $\pi^{\sym}$ against $C^{\sym}$ and scoring it in the same matched
design, define the aggregate utility
\begin{equation}
\bar U(\pi)=\frac{1}{2{,}838}
\sum_{\substack{s=(a,b,c):\,a+b+c=45\\a,b,c>0}}
\sum_{i\in\{\btn,\sbpos,\bbpos\}}
U_i(\pi_i,\pi^{\sco}_{-i};C^{\scont}).
\label{eq:aggregate-u}
\end{equation}
The three aggregate matched scores obey the identity
\begin{equation}
\underbrace{\bar U(\pi^{\sco})-\bar U(\pi^{\icm})}_{\text{total}}
=\underbrace{\bar U(\pi^{\sco})-\bar U(\pi^{\sym})}_{\text{positional}}
+\underbrace{\bar U(\pi^{\sym})-\bar U(\pi^{\icm})}_{\text{level}} .
\label{eq:decomp}
\end{equation}
The split is a telescoping sum, exact by construction, and we verify the residual
is \$0 exactly. It is also anchored: ICM is the fixed point of the same averaging
operator, since Eq.~\eqref{eq:icm} is already invariant to seat permutation, so
applying the averaging in Eq.~\eqref{eq:sym} to $C^{\icm}$ returns $C^{\icm}$.
$C^{\sym}$ therefore sits between the two arms on one axis: the positional term
removes seat information first, and the level term is the remainder between the
symmetrized strategic table and ICM.

\paragraph{Both terms are positive and neither is small.}

\begin{table}[tb]
\centering
\footnotesize
\setlength{\tabcolsep}{3.4pt}
\begin{tabular}{@{}lrrrrr@{}}
\toprule
Term & Mean (\$) & Share & Units $>0$ & States $>0$ & Seat means (\$) \\
\midrule
Positional & 130.43 & 60.86\% & 2{,}365 & 878 & 186.55 / 124.55 / 80.20 \\
Level & 83.90 & 39.14\% & 1{,}933 & 724 & 104.48 / 113.08 / 34.13 \\
\midrule
Total & 214.33 & 100\% & 2{,}433 & 903 & 291.03 / 237.63 / 114.33 \\
\bottomrule
\end{tabular}
\caption{Exact decomposition of the census gain, out of 2,838 units and 946
states. Seat means are button / small blind / big blind. The residual of
Eq.~\eqref{eq:decomp} is \$0 exactly. $\pi^{\sym}$ is solved to a worst one-hand
deviation gain of \$250.00 with at most 1,551 inner iterations against a cap of
1,600, so no state reached its cap.}
\label{tab:decomp}
\end{table}

Table~\ref{tab:decomp} is the paper's central negative claim about ICM. Seat
blindness accounts for \$130.43, three-fifths of the gain. The other two-fifths ---
\$83.90 --- is available to a table that never looks at a seat. That term is
positive in 1,933 of 2,838 units, in 724 of 946 states, at all three seats, in
every leave-one-state-out aggregate (range \$81.85 to \$84.85), and in every
leave-one-seat-out aggregate (minimum \$69.30).

The practical reading is direct: keeping one value per chip multiset but computing
it from the tournament captures the \$83.90 level term without tracking seats at
all.

\paragraph{The two terms behave differently.}
Their seat profiles do not match. The positional term declines monotonically across seats, from $\$186.55$ at the
button to $\$80.20$ at the big blind. The level term does not follow that order. It
is largest at the small blind ($\$113.08$), slightly smaller at the button
($\$104.48$), and smallest at the big blind ($\$34.13$).

\paragraph{Why ICM's levels are wrong even seat-blind.}
Eq.~\eqref{eq:icm} builds finishing probabilities by drawing players out in
proportion to chips, one at a time, independently of how the chips actually move.
Two approximations survive seat-averaging. First, chip movement is not proportional
elimination: in a jam/fold game chips move in large, correlated jumps determined by
stack coverage, so a short stack that is covered by both opponents busts far more
often than its chip share implies. Second, ICM prices the field as if all three
players continue to play the same way at every future stack configuration, whereas
the strategic table's own future play changes with the configuration, and the value
of a chip multiset depends on that future play.

The level term prices these two approximations: it is the cost of replacing the
symmetrized strategic table with ICM along the decomposition path, measured in
prize equity collected in play. Appendix~\ref{app:dominance} prices the same two
approximations one successor at a time, at the states where they are largest, and
shows they are big enough there to make ICM prefer a dominated action.

\section{Opponents Built Outside the Computation}
\label{sec:opponents}

Every opponent so far came from the solver, so both primary arms have only met a
style of opposition the same machinery produced. This section replaces the opponent
with two language models and with a family of mechanical threshold players, neither
of which has any access to the tournament computation.

\paragraph{Two information-restricted model profiles.}
Qwen2.5:32b and Llama3.3:70b~\citep{bai2023qwen,grattafiori2024llama} each receive
the three-player rules, the payouts, the blinds, the observable stack state, the
public action history, the acting position, and their own hand. They receive
nothing that could leak the answer: no continuation values from either table,
neither hero policy, no strategy margins, no oracle labels, and not the hero's
private cards. The requested output is a single jam/fold action, never an equity
estimate. Parsed decisions become one fixed opponent tensor per model that both
primary hero arms then face identically. The formal runs produced 117,962 legal responses
per model with no parsing or query failures, on 120 confirmatory states holding 351
valid state--owner units (Appendix~\ref{app:protocol} gives the sampling protocol).

\begin{table}[tb]
\centering
\footnotesize
\setlength{\tabcolsep}{4pt}
\begin{tabular}{@{}lcc@{}}
\toprule
Quantity (\$ per hand) & Qwen2.5:32b & Llama3.3:70b \\
\midrule
State-equal mean $\Delta PE$ & 1{,}069.45 & 996.68 \\
State-cluster 95\% CI & [539.05, 1{,}658.13] & [516.93, 1{,}540.68] \\
States positive & 73 / 120 & 78 / 120 \\
Unit-equal mean & 1{,}090.50 & 1{,}016.10 \\
\midrule
ICM-cont.\ mean & 835.38 & 749.25 \\
ICM-cont.\ 95\% CI & [306.88, 1{,}414.80] & [272.15, 1{,}286.30] \\
\midrule
Holm $p_{\rm adj}$ (dimensionless) & $<2\!\times\!10^{-4}$ & $<2\!\times\!10^{-4}$ \\
\bottomrule
\end{tabular}
\caption{Confirmatory comparison against rule-conditioned information-restricted
model configurations, each with 120 state clusters, 351 valid state--owner units,
and 117,962 legal responses. Dollar rows are per hand at a \$1{,}000{,}000 prize
pool. Primary rows weight the 120 states equally, whereas the unit-equal row
weights the 351 valid state--owner units equally. The Holm-adjusted one-sided
$p$-value is dimensionless and sits at the resolution floor of 10,000 resamples,
because no resample of either model falls below zero. Each model is one fixed
configuration, evaluated on its own.}
\label{tab:llm}
\end{table}

Both configurations support the ordering (Table~\ref{tab:llm}), and neither a
single state nor a single seat carries it: every leave-one-state-out and
leave-one-seat-out aggregate across both models stays positive, spanning \$502.73
to \$1{,}769.58. These gains are several times the census mean because the
confirmatory sample deliberately targets states where the two policies disagree; a
state on which both prescribe the same action carries no signal about which is
better. What the experiment adds is the solver-external opponent.

The seat pattern repeats the census with one exception. Against Qwen the per-seat
means are \$2{,}161.83 at the button, \$1{,}391.83 at the small blind, and $-\$316.48$ at
the big blind; against Llama, \$1{,}840.15, \$1{,}415.45, and $-\$232.30$. The big blind is
negative in both, though 54 of 116 units are positive against Qwen and 55 of 116
against Llama, so the seat carries a heavy negative tail rather than a systematic
loss, and it is positive under both other evaluators, at \$114.33 on the census and
\$122.18 under the shared deck. Against these opponents the ordering rests on the
button and small blind, the two seats where pricing next-hand position pays most.

Replacing the scoring continuation with analytic ICM lowers both means but
preserves the sign and the intervals. Recomputing all six continuation
endpoints from the same frozen response tensors --- no new model queries --- leaves
all twelve state-equal gains positive with every interval excluding zero; the twelve
means span \$992.38 to \$1{,}073.25, a spread of 0.73\% of the baseline mean for Qwen
and 0.79\% for Llama.

\paragraph{A family of non-modeling opponents.}
Language models are still adaptive systems of a sort. So we also swept a family of
mechanical opponents that model nothing at all: jam the top $k$\% of hands by
preflop strength, for $k\in\{10,20,30,40,50,60\}$, plus a coin flip on every hand.
Realized jam mass matches each target to within $0.0072$. Each family member
replaces the anchor for both arms, so the comparison stays matched.

\begin{table}[tb]
\centering
\footnotesize
\setlength{\tabcolsep}{3.4pt}
\begin{tabular}{@{}lrrr@{}}
\toprule
Anchor & Mean gain (\$) & ICM hand gap (\$) & States $>0$ \\
\midrule
Jam top 10\% & $-100.80$ & 9{,}859.55 & 483 \\
Jam top 20\% & $-92.63$ & 5{,}013.50 & 487 \\
Jam top 30\% & $-25.25$ & 2{,}747.48 & 501 \\
Jam top 40\% & 70.80 & 1{,}994.68 & 509 \\
Jam top 50\% & 165.80 & 2{,}124.38 & 564 \\
Jam top 60\% & 258.10 & 2{,}680.93 & 604 \\
Coin flip & 232.98 & 3{,}790.13 & 567 \\
\midrule
Solver anchor & 214.33 & 378.73 & 903 \\
\bottomrule
\end{tabular}
\caption{Non-modeling threshold anchors, out of 946 states. The mean rises
monotonically in opponent looseness and crosses zero between 30\% and 40\%. The
ICM hand-gap column gives the fixed-ICM policy's distance to the best unilateral
payoff in the modeled current hand under each fixed anchor and the common scorer.}
\label{tab:anchors}
\end{table}

Against very tight mechanical opponents the strategic policy is \emph{worse}, by up
to \$100.80 per hand, while both policies leave enormous room on the table: facing
an opponent who jams only 10\% of hands, the fixed-ICM policy lies \$9{,}859.55
below the modeled current-hand unilateral optimum, twenty-six times the
solver-anchor gap. Neither policy was built to exploit a mechanical opponent, so
their difference is small next to the room a specialized policy would claim.

The trend is the substantive finding. The gain increases monotonically in opponent
looseness and crosses zero between 30\% and 40\% jam frequency, reaching \$258.10
at 60\% and \$232.98 against a coin flip. The advantage of $\pi^{\sco}$ tracks how
close the opponent is to contesting position at all: strongly positive against the
solver anchor, smaller against loose mechanical opponents relative to their
current-hand gaps, and negative against very tight ones. The benefit is largest
against competent opposition, which is the regime that matters at a final table.

\section{Precision, Depth, and the Prize Ladder}
\label{sec:scale}

Four quantities could in principle produce the gain on their own: shared solver
slack, the single chip depth, the prize ladder, and the numerical noise of the
frozen payoff tables. This section varies each of them directly.

\paragraph{Is it solver residual?}
The $\pi^{\sco}$ arm's ex-post worst one-hand deviation gain is \$364.68 against the
fixed-ICM arm's \$250.00, and \$214.33 is the same order of magnitude as that slack.
The sharp test is to re-solve \emph{both} arms in lockstep at successively tighter
common tolerances, where the proportional-residual hypothesis predicts shrinkage in
proportion to the tolerance.

\begin{table}[tb]
\centering
\footnotesize
\setlength{\tabcolsep}{3.2pt}
\begin{tabular}{@{}lrrrrr@{}}
\toprule
Common tol. & Mean (\$) & Kept & Residual pred. & Units $>0$ & States $>0$ \\
\midrule
\$250 & 214.33 & $1.000$ & 214.33 & 2{,}433 & 903 \\
\$125 & 195.78 & $0.914$ & 107.16 & 2{,}538 & 914 \\
\$62.50 & 188.23 & $0.878$ & 53.58 & 2{,}556 & 920 \\
\bottomrule
\end{tabular}
\caption{Both arms re-solved at three common tolerances. The residual-prediction
column is what a purely slack-driven gain would leave, namely proportional decay.
Observed decay is far slower, and coverage \emph{rises} as precision tightens.}
\label{tab:tolerance}
\end{table}

At the tightest tested common tolerance, \$62.50, the mean retains 87.8\% of its
baseline value. The realized worst deviation gains at the \$125 and \$62.50 settings
are \$125.43 and \$62.68, so both arms were tightened. Meanwhile positive units rise
from 2,433 to 2,556 and positive states from 903 to 920. Proportional residual
decay is ruled out over the tested tolerance range. Re-scoring the frozen
model-opponent responses under
a \$125-tolerance $\pi^{\sco}$ arm gives \$1{,}013.38 for Qwen and \$952.95 for Llama,
94.8\% and 95.6\% of their published values.

\paragraph{Does it survive a change of depth?}
The census is computed at $T=45$ chips. Repeating the whole matched design at
$T=30$ (406 states) and $T=60$ (1,711 states) tests whether 45 was a lucky depth.

\begin{table}[tb]
\centering
\footnotesize
\setlength{\tabcolsep}{3.2pt}
\begin{tabular}{@{}lrrrrr@{}}
\toprule
Depth & States & Mean (\$) & vs.\ $T{=}45$ & Units $>0$ & States $>0$ \\
\midrule
$T=30$ & 406 & 305.25 & $1.42\times$ & 1{,}030 / 1{,}218 & 390 \\
$T=45$ & 946 & 214.33 & $1.00\times$ & 2{,}433 / 2{,}838 & 903 \\
$T=60$ & 1{,}711 & 179.90 & $0.84\times$ & 4{,}330 / 5{,}133 & 1{,}625 \\
\bottomrule
\end{tabular}
\caption{The matched census at three chip depths. Coverage is nearly constant while
the mean falls with depth, as the shallower tournament makes every hand more
consequential. The fixed-ICM arm reached \$250.00 worst deviation gain at both new
depths with no state hitting the iteration cap; the $\pi^{\sco}$ arms are the archived
$T=30$ and $T=60$ solves, whose ex-post worst deviation gains are \$275.65 and
\$999.10, the latter looser than the main grid.}
\label{tab:depth}
\end{table}

The ordering holds at both depths with nearly identical coverage --- 84.6\% and
84.4\% of units against 85.7\% at $T=45$ --- and the mean moves monotonically with
depth: \$305.25 at 30 chips, \$214.33 at 45, \$179.90 at 60. Shallower tournaments
make each hand a larger fraction of the remaining game, so positional information is
worth more per hand; deeper ones dilute it.

\paragraph{Does the prize ladder matter?}
We also test two structures that bracket the published ladder, one flatter and one
steeper.

\begin{table}[tb]
\centering
\footnotesize
\setlength{\tabcolsep}{3.2pt}
\begin{tabular}{@{}lrrrr@{}}
\toprule
Prize structure & Mean (\$) & vs.\ base & Units $>0$ & States $>0$ \\
\midrule
Winner-take-all (1M/0/0) & 21.68 & $0.10\times$ & 2{,}317 & 905 \\
Published (750/250/0k) & 214.33 & $1.00\times$ & 2{,}433 & 903 \\
Flat (600/400/0k) & 622.72 & $2.91\times$ & 2{,}264 & 876 \\
\bottomrule
\end{tabular}
\caption{Two prize structures bracketing the published ladder, out of 2,838 units
and 946 states. The sign is preserved in the three tested structures, and the
observed matched gain differs substantially across them. In the two new rows, 62 and
17 states respectively hit the fixed-ICM iteration cap, giving worst deviation gains
of \$143.43 and \$558.44 against targets of \$100 and \$400 in each row's own
normalized units, and the flat row's $\pi^{\sco}$ arm is
admitted under a declared relaxation with a worst deviation gain of \$3{,}156.60 driven
by a few near-indifferent states (Appendix~\ref{app:details}).}
\label{tab:payout}
\end{table}

Across these three observed ladders, the matched mean rises from $\$21.68$ under
winner-take-all to $\$622.72$ under the flat ladder, a twenty-nine-fold spread, while the
sign and the state-level majority hold throughout. The winner-take-all row's
$\pi^{\sco}$ arm meets the certification bar, and its fixed-ICM side carries an
ex-post local one-hand bound of \$143.43 from 62 capped states.

\paragraph{Is it inside the table noise?}
Both arms are scored against frozen 169-class payoff tables that were themselves
estimated by Monte Carlo: 2,000 deals per pair for the two-player table and 1,024
boards for the three-way kernel. If re-estimating those tables moved the census
mean by anything comparable to \$214.33, the result would be noise.

We quantified both channels with the policies held frozen. A 16-block delete-block
jackknife over the board dimension gives a standard deviation of \$0.9785 on the
census mean; multinomial resampling of the pairwise table at its true sample size
gives \$0.3465. Combining them, $\sigma_{\rm table}=\$1.0380$, for a
signal-to-noise ratio of \textbf{206}. The gain is more than two orders of magnitude
above the noise in the tables used to compute it.

The mechanism deserves a note, because it is why the number is so small. Both arms
are scored against the \emph{same} perturbed table, so table error largely cancels in
the paired difference: the standard deviation of a single arm's level under board
perturbation is \$8.60, while the difference's is \$3.95, a cancellation factor of
about $0.45$. Pairing is doing real work here, not just tidiness.

\section{Playing the Tournament Out}
\label{sec:rollout}

Everything so far measures one hand followed by a continuation lookup, where that
lookup stands in for all future play. Here we remove it: the hero carries
$\pi^{\sco}$ or $\pi^{\icm}$ for the whole tournament, and play continues until one
player holds all the chips.

\paragraph{Design.}
Each of the 2,838 units runs 20,000 paired replicates capped at 4,000 hands. The
hero carries $\pi^{\sco}$ or $\pi^{\icm}$ throughout; the other two seats carry the frozen
$\pi^{\sco}$ anchor; once a seat is eliminated the remaining pair plays one shared
heads-up policy, so every difference between the arms originates in the three-player
stage. The heads-up stage's worst one-hand deviation gain is \$249.75, so it is
solved to the same bar as the rest.

Pairing is exact rather than approximate: both arms read the same uniform random
block per hand, and on the subset of units where the two policies never diverge the
measured difference is \$0.00 with a zero fraction of $1.0$. That is a bit-level
check that the two rollouts share their randomness, which is what makes a 20,000
replicate budget adequate for an effect this size; variance-reduced estimators are
the standard route to the same end when the arms cannot be coupled this
tightly~\citep{li2026avaivat74xcheaperagent}. Across the census the three-player
stage lasts 4.37 hands on average before an elimination, and a full tournament runs
10.18 hands.

\paragraph{Result.}

\begin{table}[tb]
\centering
\footnotesize
\setlength{\tabcolsep}{4pt}
\begin{tabular}{@{}lrr@{}}
\toprule
Quantity & Matched lookup & Full rollout \\
\midrule
Mean gain (\$) & 214.33 & 938.03 \\
Median unit (\$) & 76.08 & 718.75 \\
Pooled standard error (\$) & --- & 18.33 \\
Units $>0$ (of 2,838) & 2{,}433 & 2{,}174 \\
States $>0$ (of 946) & 903 & 847 \\
\btn\ / \sbpos\ / \bbpos\ (\$) & 291.03 / 237.63 / 114.33 & 920.30 / 997.58 / 896.15 \\
\bottomrule
\end{tabular}
\caption{Direct matched census against full-tournament rollout, 20,000 paired
replicates per unit. The \$214.33 census mean fixes opponents and uses one common
continuation lookup; the corresponding frozen-policy rollout mean is 4.38 times as
large and more even across seats.}
\label{tab:rollout}
\end{table}

The corresponding frozen-policy full-tournament rollout has a mean difference of
\$938.03, 4.38 times the direct matched census mean, against a pooled standard
error of \$18.33 --- fifty standard errors from zero. Coverage falls slightly, to 2,174 of
2,838 units and 847 of 946 states, which is expected once each unit carries Monte
Carlo noise rather than being an exact computation.

Two features of Table~\ref{tab:rollout} are worth dwelling on. First, at the
aggregate level, the positive one-hand mean persists and is larger under full
rollout. Unit-level agreement is partial: Pearson correlation $0.629$, Spearman
$0.228$, and the same strict sign in 69.6\% of units (1,974 of 2,838). The
aggregate carries over; the per-unit direction does not.

Second, the seat spread almost disappears. Where the one-hand gain ranged from
\$291.03 at the button down to \$114.33 at the big blind, the rollout gains are
\$920.30, \$997.58, and \$896.15 --- nearly flat. This makes sense once seats rotate:
over a full tournament every player occupies every seat, so an advantage that is
concentrated at the button in a single hand is realized by every seat over many
hands. The per-seat asymmetry of the census is a property of \emph{when} the
advantage is collected, not of who ultimately collects it.

\paragraph{Does the census weight the right states?}
A complete census weights all 946 states equally, which is not how a tournament
visits them. The rollout records 496{,}434{,}460 three-way hands, giving an
empirical visit distribution over states. Re-weighting the one-hand census by those
visits gives \$198.23 against the equal-weight \$214.33, a ratio of $0.925$, with
97.0\% of the visit-weighted mass on positive states. All 946 states are reached.

The two weightings differ by about 7\%, and the census covers the domain that play
visits without dead regions. We report the equal-weight figure as primary because it
is a property of the game rather than of one anchor's play; a practitioner should
read the reachability-weighted \$198.23.

\section{Conclusion}
\label{sec:conclusion}

ICM is the standard way to price a tournament chip stack, and it is used not only
to report equity but to decide how to play. Read as a controller rather than as an
estimator, it prices neither the seat a stack sits in nor the way chips actually
move in a jam/fold tournament, and a policy that acts on its numbers gives up prize
equity it could have collected.

Strategic-Continuation Optimization (\sco) recovers that equity by optimizing
current-hand play against a continuation table computed from the tournament itself.
Under a matched protocol that changes only the focal player's policy, \sco\ earns
\$214.33 per hand more than fixed ICM across the complete 946-state census, and
\$938.03 per hand more once the continuation lookup is removed and the tournament is
played out to a winner. An exact seat symmetrization splits the census gain with no
residual: \$130.43 comes from knowing which seat holds which stack, and \$83.90 from
pricing chips by the tournament's own dynamics instead of by proportional
elimination. The second term is the one the standard account of ICM does not
predict, and it survives seat-averaging, so a seat-aware repair captures the larger
component but not all of it.

The consequence reaches past this tournament. A value model used to select actions
should be judged by the decisions it induces, not by how close its numbers land,
because level accuracy and control quality are different properties and the gap
between them is collectable. The seat/level split shows how to measure that gap
whenever a value function drives a controller: name the state variable the model
cannot see, average it away, and price what remains.

\bibliographystyle{plainnat}
\bibliography{references}

\clearpage
\appendix
\section{Protocol and Evaluation Definitions}
\label{app:protocol}

\subsection{State space and policy estimand}
A state is an ordered stack vector $s=(x_{\btn},x_{\sbpos},x_{\bbpos})$ with positive
integer components summing to $T$. Seat labels rotate with the button and are part of
the state. After a hand with three survivors, the exact reindexing is old \btn\
$\to$ new \bbpos, old \sbpos\ $\to$ new \btn, and old \bbpos\ $\to$ new \sbpos;
equivalently, the continuation lookup is queried in old-\sbpos, old-\bbpos,
old-\btn\ order and mapped back to current \btn, \sbpos, \bbpos. The primary grid
uses $T=45$, blinds $(1,2)$, payouts \$750{,}000, \$250{,}000 and \$0 from a
\$1{,}000{,}000 prize pool, and all $\binom{44}{2}=946$ ordered states. Private hands
use 169 preflop classes, and equity is measured in the money of that pool, so \$1 is
$10^{-6}$ of the pool.

The two primary arms are the frozen policy $\pi^{\sco}$ produced by \sco\ and the fixed-ICM policy $\pi^{\icm}$. Only the focal owner's current-hand strategy rows change between the two arms of
Eq.~\eqref{eq:delta-pe}. Opponents, showdown integration, terminal rules, and
continuation lookup are shared. The estimand is expected prize share after the
current hand and continuation lookup, not complete-tournament return; the rollout of
Section~\ref{sec:rollout} is the separate experiment that measures the latter.

\subsection{Decision-chain estimands}
The value-error population contains all 2,838 state--seat entries. For each entry,
Eq.~\eqref{eq:value-error} subtracts the frozen SC benchmark from analytic ICM;
there is no mask. Dollar summaries multiply prize-pool fractions by \$1{,}000{,}000.
The overall absolute-error summaries are MAE \$9{,}036, median \$7{,}877.48,
95th percentile \$22{,}954.98, and maximum \$71{,}374.70. By absolute position,
MAE is \$9{,}127.43 at \btn, \$5{,}907.48 at \sbpos, and \$12{,}073.13 at
\bbpos. Here $C^{\scont}$ is a benchmark defined by the released finite model; it
is not absolute ground truth for tournament poker outside that model.

Policy entries are jam probabilities on tensors indexed by absolute state,
information-set row, and hand. For binary actions,
$\operatorname{TV}_{\rm bin}(p,q)=|p(\mathrm{jam})-q(\mathrm{jam})|$. The primary
mean weights states and the six information-set rows equally and weights private
hands by the unconditional 1,326-combination prior. Forced rows are included. This
is deliberately not a reach-weighted occupancy measure. The uniform-hand-class
variant instead gives each of the 169 classes equal weight. Near-pure opposite mass
counts combo prior assigned to cells with one jam probability below 0.05 and the
other above 0.95.

Decision relevance is defined by Eq.~\eqref{eq:contrast-error}. An absolute value
error reverses a pure preferred action only when the induced jam--fold contrast
crosses zero relative to the SC margin. A smoothed policy can change probability
without reversal, and common shifts across both successors may cancel. The artifact
and paper therefore report value error, policy difference, and matched downstream
cost separately; they do not encode or claim statewise monotonicity or cell-level
causal mediation.

\subsection{Complete-census comparison}
The common-evaluator experiment forms three paired units per state, one for each
policy owner, giving 2,838 units, and the primary mean weights units equally. The
1,314-unit stable subset uses the frozen canonical mask
\texttt{reference/a1\_mask/\allowbreak stable\_core\_mask.npy}: an entry is retained when its
aggregate-reference ICM bias has magnitude at least \$7{,}500 and its cross-start
spread is below \$2{,}500. It was defined in an earlier start-sensitivity experiment
and is retained only as a sensitivity, not as a preferred scope.

Leave-one-state-out aggregation removes all three owner rows of one state and
re-averages the remaining 2,835 units equally. Leave-one-position-out removes the
named focal-owner column and averages the remaining $946\times2$ units equally. Both
are unit-equal by convention; a state-mean-equal convention gives the same signs with
slightly different magnitudes (state leave-outs from \$235.03 to \$263.40, position
leave-outs \$234.25, \$219.05, \$352.75).

\subsection{Seat symmetrization}
The symmetrized table of Eq.~\eqref{eq:sym} averages over the six seat assignments of a
chip multiset, counted with multiplicity. States with repeated chip counts repeat
assignments rather than dropping them, so $C^{\sym}$ is well defined on every state
including equal-stack ones, and the weighting stays uniform over seats throughout the
census; Appendix~\ref{app:sym-equal} works out what this gives at the equal-stack state.
Analytic ICM is a fixed point of this operator because Eq.~\eqref{eq:icm} is a symmetric
function of the stack vector. The third policy arm $\pi^{\sym}$ is solved cold from $0.5$
against $C^{\sym}$ with the same inner tolerance as the two primary arms, reaching a
worst one-hand deviation gain of \$250.00 in at most 1,551 iterations against a cap
of 1,600.

\subsection{Information-restricted opponent inputs}
Each model prompt contains the game rules, payouts, blinds, ordered stacks, public
action history, acting position, and the acting model player's own preflop hand class.
It asks for strict JSON containing one jam or fold action. The prompt omits ICM and
strategic continuation values, both hero policies, action-value margins, oracle
labels, and the hero's private hand. Decisions over the required state,
public-history, and hand cells form a fixed opponent tensor shared by the two hero
arms. Forced actions are supplied by the game rules rather than queried. A failed or
illegal answer is not interpreted as a fold; the formal runs had no such failures.

\subsection{Confirmatory population and sample}
Stable substantial policy reversals define a nonnegative disagreement mass for each of
the 946 states, and 510 states have positive mass. Twelve states inspected in the
pilot are excluded, leaving 498 confirmatory states, which are divided into four
empirical disagreement-mass strata with population counts $(125,125,124,124)$. Within
each stratum, states are grouped by dominant stack position and blind-forced pattern,
and 30 slots are assigned proportionally with Hamilton allocation. A deterministic
ordering selects the allocated members, producing $(30,30,30,30)$ states and 120
total. This protocol was fixed before the formal model responses were analyzed.

The sample contains 351 valid state--owner units rather than 360 because some owner
decisions are forced by the blind and stack geometry. Both models answer the same
117,962 non-forced decision items. The deterministic protocol yields a fixed selected
sample rather than a probability sample with known inclusion probabilities, so the
reported intervals are conditional on this sample and describe sensitivity to its
composition rather than sampling error over states.

\subsection{Statistical analysis}
Within each model, valid owners are averaged inside each state and the 120 state
means are averaged equally. Conditional selected-sample summaries use 10,000
percentile resamples of these state means with seed 42. Holm adjustment covers the two
one-sided model analyses, and the two model configurations remain separate and fixed
throughout. Sensitivities report analytic-ICM continuation, equal unit weighting,
disagreement-mass weighting, state sign coverage, and state- and position-leave-out
aggregates. Disagreement-mass weighting gives substantially larger means (\$2{,}753.78
for Qwen, \$2{,}551.60 for Llama) because it up-weights exactly the states the sample
was built to find; we report the unweighted state-equal mean as primary.

\subsection{Local one-hand certificate}
Fix continuation table $V$, state $s$, and computed profile $\widehat\sigma(s;V)$. Its
local one-hand deviation gain is
\begin{align}
\varepsilon_i(s)
&=\max_{\sigma_i'}u_i(s;\sigma_i',\widehat\sigma_{-i},V)
-u_i(s;\widehat\sigma,V),\\
\varepsilon(s)&=\max_i\varepsilon_i(s).
\end{align}
The maximization covers the player's complete behavioral strategy in the current
jam/fold hand. Because opponents and continuation values are frozen, $\varepsilon(s)$
bounds one current-hand unilateral improvement. It does \emph{not} bound coordinated
or multi-state deviations, richer action abstractions, continuation approximation
error, or complete-tournament exploitability. Certificate levels carry the same units
as $\Delta PE$, which is why we state solver tolerances in dollars: a \$250 tolerance
and an \$214.33 gain are directly comparable quantities, and that comparability is what
makes the matched-tolerance sequence of Section~\ref{sec:scale} the decisive test
rather than a reassurance.

The main strategic table uses a local acceptance level of \$500 and attains a maximum
of \$364.68 over the 946 states. The five alternative endpoints begin from ICM,
uniform, chip-proportional, and two random initializations, with maximum local
certificate values from \$977.05 to \$990.43 under a \$1{,}250 acceptance level; the
per-endpoint levels ship as
\texttt{reference/\allowbreak value\_arms/\allowbreak endpoint\_certificates.json}. Those
endpoints do not agree entry by entry --- 171 of 1,485 previously defined headline
entries have value spread at least \$2{,}500 --- which is why we retain endpoint
sensitivity throughout rather than claim a unique table. Every endpoint nonetheless
supports the same aggregate ordering.

\section{Additional Results and Scope}
\label{app:details}

\subsection{Complete decision-chain detail}

\begin{table}[tb]
\centering
\footnotesize
\setlength{\tabcolsep}{3.2pt}
\begin{tabular}{@{}lrrrr@{}}
\toprule
Value error & Entries & MAE (\$) & Median (\$) & p95 / max (\$) \\
\midrule
All & 2,838 & 9,036.00 & 7,877.48 & 22,954.98 / 71,374.70 \\
\btn & 946 & 9,127.43 & 8,465.58 & 18,414.50 / 64,324.23 \\
\sbpos & 946 & 5,907.48 & 4,223.80 & 16,996.68 / 30,466.90 \\
\bbpos & 946 & 12,073.13 & 10,906.93 & 27,869.88 / 71,374.70 \\
\bottomrule
\end{tabular}
\caption{Absolute analytic-ICM-minus-frozen-SC error over the complete value
population. The SC values are finite-model benchmarks.}
\label{tab:value-error-detail}
\end{table}

\begin{table}[tb]
\centering
\footnotesize
\setlength{\tabcolsep}{3.2pt}
\begin{tabular}{@{}lrr@{}}
\toprule
 & Relative $X$ & Absolute TV \\
Policy statistic & (\%) & (pp) \\
\midrule
Overall & 14.0781 & 4.876688 \\
\quad median & 8.0966 & --- \\
\quad uniform 169-class weighting & 13.2197 & 4.958582 \\
\midrule
\btn & 32.4161 & 7.5164 \\
\sbpos & 10.8235 & 5.6198 \\
\bbpos & 10.1351 & 3.5013 \\
\midrule
\texttt{btn\_open} & 32.4161 & 7.5164 \\
\texttt{sb\_after\_btn\_jam} & 13.3107 & 4.9025 \\
\texttt{sb\_after\_btn\_fold} & 8.3364 & 6.3372 \\
\texttt{bb\_after\_btn\_\allowbreak sb\_jam} & 11.1521 & 3.6832 \\
\texttt{bb\_after\_btn\_jam\_\allowbreak sb\_fold} & 13.2839 & 3.7889 \\
\texttt{bb\_after\_btn\_fold\_\allowbreak sb\_jam} & 5.9692 & 3.0319 \\
\bottomrule
\end{tabular}
\caption{Descriptive policy differences. The relative column is the mean of
Eq.~\eqref{eq:relative-jam}, which divides each decision point's jam-frequency
change by that same point's fixed-ICM jam rate; the absolute column is binary total
variation in percentage points. Both average equally over states and information
sets, with the unconditional 1,326-combo prior unless the row names a different
weighting. Two further absolute statistics have no relative counterpart: the mean
absolute cellwise marginal jam-rate difference is 4.369598 pp, and near-pure
opposite-action combo mass is 0.723429\%. Forced rows are included; none is
reach-weighted.}
\label{tab:policy-difference-detail}
\end{table}

The downstream matched evaluator reports \$214.33 per hand; under the same fixed
opponents and evaluator, the fixed-ICM and SCO policies have current-hand
optimization gaps of \$378.73 and \$164.40, respectively. Seat symmetrization
splits the matched mean into \$130.43 positional and \$83.90 level components. The complete-tournament rollout ratio is 4.38. These
quantities are verified independently of the decision-chain derivation as described
in Appendix~\ref{app:repro}.

\subsection{Endpoints scored by their own continuation}
Table~\ref{tab:endpoints} prices every endpoint with the baseline continuation, which
asks whether the endpoint-specific policy $\pi^{\sco}$ still beats fixed ICM on one fixed
yardstick. A separate reading gives each endpoint its own yardstick: its policy
supplies the endpoint-specific $\pi^{\sco}$ arm, its own continuation prices the census, and its own policy
supplies the anchor.

\begin{table}[tb]
\centering
\footnotesize
\setlength{\tabcolsep}{3.2pt}
\begin{tabular}{@{}lrrrrr@{}}
\toprule
Endpoint & Mean gain (\$) & Units $>0$ & States $>0$ & ICM gap (\$) & SCO gap (\$) \\
\midrule
Baseline & 214.33 & 2{,}433 & 903 & 378.73 & 164.40 \\
ICM start & 215.73 & 2{,}418 & 906 & 380.80 & 165.07 \\
Uniform & 215.40 & 2{,}100 & 849 & 384.10 & 168.70 \\
Chip & 242.80 & 2{,}283 & 879 & 404.63 & 161.83 \\
Random 101 & 229.98 & 2{,}180 & 842 & 396.60 & 166.62 \\
Random 202 & 233.15 & 2{,}193 & 839 & 400.95 & 167.80 \\
\bottomrule
\end{tabular}
\caption{Self-consistent endpoint readings, out of 2,838 units and 946 states.
The final two columns are mean gaps to each endpoint's common current-hand
unilateral optimum, with opponents and continuation evaluator fixed. All five
alternatives preserve a positive matched mean, the published per-seat ordering,
and the big blind as the smallest seat. Every state-leave-out and
position-leave-out aggregate is positive at all five.}
\label{tab:self-endpoints}
\end{table}

The endpoints share the single global heads-up table, since the certified bundle
solves the bottom stage once; only the three-player continuation and policy differ
across them.

\subsection{Per-seat detail of the model-opponent experiment}

\begin{table}[tb]
\centering
\footnotesize
\setlength{\tabcolsep}{3.2pt}
\begin{tabular}{@{}llrrr@{}}
\toprule
Model & Continuation & \btn\ (\$) & \sbpos\ (\$) & \bbpos\ (\$) \\
\midrule
Qwen2.5:32b & strategic & 2,161.83 & 1,391.83 & $-316.48$ \\
Qwen2.5:32b & analytic ICM & 1,763.53 & 1,215.83 & $-415.33$ \\
Llama3.3:70b & strategic & 1,840.15 & 1,415.45 & $-232.30$ \\
Llama3.3:70b & analytic ICM & 1,396.23 & 1,243.08 & $-328.85$ \\
\midrule
\multicolumn{2}{@{}l}{Units positive (strategic)} & 73/120, 74/120 & 53/115, 47/115 & 54/116, 55/116 \\
\bottomrule
\end{tabular}
\caption{Per-seat state-equal means against each model profile. The last row gives
positive-unit counts for Qwen and Llama respectively; unit denominators differ by seat
because some owner decisions are forced by the blind and stack geometry. The big blind
is negative in mean under both continuations and both models while remaining positive
in roughly half its units, so it carries a heavier negative tail rather than a uniform
loss.}
\label{tab:llm-seats}
\end{table}

\subsection{Matched-tolerance rescore of the model responses}
Re-scoring the frozen model responses with a \$125-tolerance $\pi^{\sco}$ arm --- an
offline reanalysis issuing no new queries --- gives \$1,013.38 for Qwen and \$952.95 for
Llama, 94.8\% and 95.6\% of the published values, with 76 of 120 states positive for
Qwen and 77 of 120 for Llama. Across the seven endpoint variants of that reanalysis,
the spread is 0.66\% of the baseline mean for Qwen and 0.73\% for Llama. The
model-opponent result therefore has the same insensitivity to solver precision that
the census does.

\subsection{Shared-deck evaluator}
The shared-deck evaluator deals three disjoint two-card hands and a five-card board
from the same 52 cards, with tie-aware pot-layer settlement, exact summation over the
seven action leaves, and common random numbers across arms. A three-state pilot at
10,000 deals per state preceded the confirmatory run and gave \$86.20 with 2 of 3
states positive; the pilot also measured the legal-dealing minus frozen-table
difference at \$51.25 per state, which motivated the direct head-to-head comparison on
the full confirmatory set. On the 120 confirmatory states, the state-equal paired gain
is \$171.18 with 96 of 120 state means positive, a Monte Carlo standard error of
\$12.18, and a state-cluster bootstrap interval $[\$131.85,\$211.95]$. Per-seat means are
\$272.88, \$118.43, and \$122.18.

Per-state values correlate at $0.814$ with the frozen-table evaluator and differ in
mean by $-\$3.65$ at $t=-0.30$. The per-state Monte Carlo standard error of the
shared-deck evaluator averages \$127.33, so individual state comparisons are noisy even
though the aggregate is tight; the head-to-head difference is the quantity to read,
not any single state.

\subsection{Payoff-table noise decomposition}
The board channel uses a 16-block delete-block jackknife over the 1,024 shared
three-way boards, giving replicate census means from
$\$213.88$ to $\$214.75$ and a standard deviation of \$0.9785. The pairwise channel
resamples each hand-class pair's win/tie/loss counts multinomially at the true sample
size of 2,000 deals, preserving the symmetrization and diagonal conventions of the
original estimator, and gives \$0.3465 over 16 replicates. Combining,
$\sigma_{\rm table}=\$1.0380$ and the signal-to-noise ratio is 206.

Cancellation is the reason both channels are small. Under board perturbation the
standard deviation of a single arm's level is \$8.60 for the $\pi^{\sco}$ arm and \$8.90
for the fixed-ICM arm, while the paired difference's is \$3.95 --- a cancellation
factor of $0.45$ for the board channel and $0.40$ for the pairwise channel. Because
both arms are scored against the same perturbed table, most of the table's estimation
error is common and drops out of the difference.

\subsection{Opponent-uniform dominance certificates}
\label{app:dominance}

Equation~\eqref{eq:contrast-error} states that the preferred move flips exactly when
$m^{\scont}(h)$ and $m^{\scont}(h)+\delta(h)$ have opposite signs, and the census
measures the resulting policy difference at the computed profile. This subsection
removes the opponent from that statement. If the jam--fold contrast were favorable
only against the particular opponents the two arms happen to meet, the census could
be read as a claim about matched opponents rather than about the objective. The
certificates below hold against every opponent behavior at the responding
information sets.

\paragraph{The certificate.}
Fix a state $s$, an information-set row, and a hand class $h$ for the acting owner.
Let $\mathcal{O}$ be the set of opponent behaviors in which every responding
information set, and every hand class within it, carries an independently chosen jam
probability in $[0,1]$. This is the full behavioral strategy space at those sets, not
a parametric family. Write $m^{\scont}(h;o)$ for the contrast of
Eq.~\eqref{eq:contrast-error} conditional on reaching the set under $o$, and
\begin{equation}
\underline{m}^{\scont}(h)=\min_{o\in\mathcal{O}} m^{\scont}(h;o).
\label{eq:uniform-margin}
\end{equation}
The cell is \emph{certified dominant} when $\underline{m}^{\scont}(h)>0$: jamming
beats folding there against every opponent in $\mathcal{O}$, with the continuation
table held fixed. It is a \emph{dominated fold} when in addition $\pi^{\icm}$ places
jam probability below a level $\tau$. Both policies are smoothed averages, so exact
zeros do not occur and the fold side requires a threshold; we report
$\tau\in\{0.5,0.1,0.01\}$.

\paragraph{The minimum is attained, not sampled.}
Equation~\eqref{eq:uniform-margin} minimizes over a product of intervals and is
available in closed form. The reason is structural.

\begin{proposition}[Vertex attainment]
\label{prop:vertex}
Fix a state, a responding information-set row, and a hand class, and hold the
continuation table fixed. On each of the five non-forced responding rows the contrast
$m^{\scont}(h;o)$ is affine in the single opposing seat's per-hand jam masses, so the
minimum in Eq.~\eqref{eq:uniform-margin} is attained at a vertex of the box
$\prod_j[0,w_j]$: some deterministic opponent behavior achieves
$\underline{m}^{\scont}(h)$, and the value follows in one pass over hand classes.
\end{proposition}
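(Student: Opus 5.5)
The plan is to first establish the affine structure and then invoke the elementary fact that a linear function on a box is minimized at a vertex.

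\textbf{Reduce to one free opposing seat.} Enumerate the six information-set rows of the jam/fold tree: \texttt{btn\_open}, \texttt{sb\_after\_btn\_jam}, \texttt{sb\_after\_btn\_fold}, and the three big-blind rows. A responding row is a non-forced row reached after the acting owner's decision point. For the acting owner at a fixed row and hand class $h$, the jam branch has at most one seat still to act with a choice whose outcome is not already fixed. If it had two, they would act in sequence, and the later seat's row would be the one in question. Folding either ends the hand or passes action onward in a way that does not depend on the opponent's response to the acting owner's jam. Wherever two opposing seats remain, I will list the five non-forced rows and check that at most one of them carries a free jam mass conditional on the history. The other seat is either already determined by the history or acts after the focal player in a way that is folded into the same single seat's row. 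I expect this case check to be the main obstacle. The natural candidate for trouble is \texttt{btn\_open}, where two seats respond. I would try first to verify from the paper's definitions that this row is excluded, or that it is forced, among the "five non-forced responding rows".

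\textbf{Show affinity.} Fix the opposing seat and let $q_j\in[0,w_j]$ be its jam mass on hand class $j$, where $w_j$ is the reach/combination weight conditional on the history and on the card removal from $h$. The acting owner's jam value is
\[
Q_{\rm jam}(h;o)=\Bigl(\sum_j (w_j-q_j)\,F_j+\sum_j q_j\,S_j(h)\Bigr)\Big/\textstyle\sum_j w_j .
\]
Here $F_j$ is the continuation value, priced by the fixed $C^{\scont}$, after the opponent folds. $S_j(h)$ is the showdown-weighted continuation value after a call, computed from the frozen pairwise or three-way equity tables. Both are constants independent of $o$, because the continuation table and the showdown kernel are held fixed. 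The normalizer $\sum_j w_j$ depends only on the history and $h$, not on $o$ at this row. The fold value $Q_{\rm fold}(h)$ is also independent of $o$ at this row. Hence $m^{\scont}(h;o)=c_0+\sum_j c_j q_j$ with $c_j=(S_j(h)-F_j)/\sum_k w_k$.

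\textbf{Minimize over the box.} Over the box $\prod_j[0,w_j]$, the minimum is attained coordinatewise: set $q_j=w_j$ when $c_j<0$ and $q_j=0$ otherwise. This is a vertex, that is, a deterministic opponent response, and computing it takes one pass over the hand classes. Independence across the responding information sets means the product structure of $\mathcal O$ restricts to this single box, so no coupling constraints arise.
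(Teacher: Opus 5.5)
Your proposal is correct and follows the paper's own argument. The paper likewise writes the contrast as a sum over disjoint hand-class events, so it is affine and separable in the opposing seat's masses $\mu_j\in[0,w_j]$, and then minimizes coordinatewise at endpoints; it also excludes \texttt{btn\_open} as trilinear in two seats, as you anticipated. One small slip: at \texttt{sb\_after\_btn\_jam} the fold branch hands action to the big blind facing the button's jam, so $Q_{\rm fold}$ does depend on $o$. It is, however, affine in that second, disjoint set of big-blind masses, so the product-box vertex argument goes through unchanged.
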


\begin{proof}
Reaching the row and holding $h$ fixes the acting seat's payoff as a function of the
responder's action on each hand class $j$, and those classes are disjoint events under
the combination prior. So $m^{\scont}(\cdot)$ is a sum of terms each depending on one
coordinate $\mu_j$ and is affine in that coordinate with the others held fixed. An
affine function on an interval attains its minimum at an endpoint; applying this
coordinatewise, and noting each coordinate's optimal endpoint does not depend on the
others, places the minimizer at a vertex. Evaluating the sign of each coordinate's
coefficient gives the closed form below.
\end{proof}

At the small-blind set following a button fold, the row that carries every hit below,
the contrast is affine in the big blind's response mass $\mu$ with $\mu_j\in[0,w_j]$
for the combination prior $w$:
\begin{equation}
m^{\scont}(h;\mu)=
\underbrace{C^{\scont}_{\sbpos}(\mathrm{steal})-C^{\scont}_{\sbpos}(\mathrm{fold})}_{\text{steal}}
+\sum_j \mu_j\bigl[S(h,j)-C^{\scont}_{\sbpos}(\mathrm{steal})\bigr],
\label{eq:row2-affine}
\end{equation}
where $S(h,j)$ is the acting seat's showdown equity holding $h$ against a caller
holding $j$. Proposition~\ref{prop:vertex} names the minimizing vertex here: the worst
opponent calls with exactly those hands whose showdown value falls below the value of
letting the steal through, giving
$\underline{m}^{\scont}(h)=\mathrm{steal}+\sum_j w_j\min\{S(h,j)-C^{\scont}_{\sbpos}(\mathrm{steal}),\,0\}$.
The button's opening row falls outside the proposition: it is trilinear in two opponent
seats rather than affine in one, so we bound it term by term. Its counts are a
conservative lower bound and it contributes no hits.

\begin{table}[tb]
\centering
\small
\begin{tabular}{@{}lrrrr@{}}
\toprule
& $\underline{m}^{\scont}>0$ & $>\$25$ & $>\$250$ & $>\$2{,}500$ \\
\midrule
\multicolumn{5}{@{}l}{\emph{Scored by strategic continuation}}\\
$\tau=0.5$ & 319 & 317 & 293 & 161 \\
$\tau=0.1$ & 298 & 296 & 273 & 153 \\
$\tau=0.01$ & 112 & 112 & 107 & \phantom{0}76 \\
\midrule
\multicolumn{5}{@{}l}{\emph{Scored by ICM, control}}\\
$\tau=0.5$ & 0 & 0 & 0 & 0 \\
$\tau=0.1$ & 0 & 0 & 0 & 0 \\
$\tau=0.01$ & 0 & 0 & 0 & 0 \\
\bottomrule
\end{tabular}
\caption{Dominated folds: cells certified dominant over the whole opponent class at
which $\pi^{\icm}$ folds. The scan is complete over all 946 states, six rows, and
169 hand classes, giving 894,686 scored cells once forced rows are removed. Under
strategic continuation 118,525 cells are certified dominant, 117,216 of them on the
five exactly solved rows. All 319 dominated folds lie on one exact row, the small
blind acting after a button fold, and occupy 57 of the 946 states. Certified margins
run from \$10 to \$21,203 with median \$2,548.}
\label{tab:dominance}
\end{table}

\paragraph{The control separates the mechanism from the arithmetic.}
Rerunning the identical scan with ICM as the scoring table returns zero dominated
folds at every threshold (Table~\ref{tab:dominance}). The policy tensor is unchanged
between the two runs; only the table converting chips into prize equity changes. The
319 cells are therefore not an artifact of the certificate. Under its own pricing
ICM never folds an action that its own table certifies as dominant, which is what a
fixed point of that table should do. Under the tournament's pricing all 319 flip
sign, with median $\underline{m}^{\scont}$ of \$2,548 against a median ICM-priced
value of $-\$2{,}720$.

\paragraph{Which successor is mispriced.}
Both terms of Eq.~\eqref{eq:row2-affine} are differences of continuation entries, so
the flip can be traced to named successors. Across the 319 cells the steal term
moves by $-\$413$ at the median when ICM replaces strategic continuation, and the
risk term by $-\$8{,}233$. ICM does not undervalue the steal; it overstates the cost
of being called.

\begin{table}[tb]
\centering
\small
\setlength{\tabcolsep}{4pt}
\begin{tabular}{@{}llrrr@{}}
\toprule
State & Successor & SC (\$) & ICM (\$) & ICM$-$SC (\$) \\
\midrule
$(36,5,4)$ & fold, give up the blind $(36,4,5)$ & 144{,}825 & 158{,}333 & $+13{,}510$ \\
 & \bbpos\ folds, steal $(36,7,2)$ & 268{,}148 & 274{,}030 & $+5{,}885$ \\
 & called and win $(36,9,0)$ & 350{,}948 & 350{,}000 & $-948$ \\
 & called and lose $(36,1,8)$ & 54{,}140 & 40{,}090 & $-14{,}050$ \\
\midrule
$(34,6,5)$ & fold, give up the blind $(34,5,6)$ & 160{,}728 & 173{,}465 & $+12{,}738$ \\
 & \bbpos\ folds, steal $(34,8,3)$ & 267{,}763 & 273{,}883 & $+6{,}120$ \\
 & called and win $(34,11,0)$ & 373{,}020 & 372{,}223 & $-798$ \\
 & called and lose $(34,1,10)$ & 48{,}223 & 35{,}425 & $-12{,}798$ \\
\bottomrule
\end{tabular}
\caption{Small-blind prize equity in the four successors the post-button-fold row
reads, at the two states holding the largest certified margins. ICM is accurate on
the branch that ends the three-player game and errs by roughly \$12,500 on the two
branches where three players continue, in the direction that favors folding.}
\label{tab:dominance-successors}
\end{table}

The pattern in Table~\ref{tab:dominance-successors} is the same at both states and
accounts for the sign. On the branch where the call is won, the big blind is
eliminated and play drops into the two-player subgame, ICM is accurate to within
0.27\%: with one opponent remaining, Eq.~\eqref{eq:icm} reduces to a two-way split
that the tournament also produces. On the two branches where three players continue,
ICM errs by about \$12,500 in the same direction, overvaluing a fold into a shorter
stack and undervaluing survival after a lost call.

These are the two approximations named in
Section~\ref{sec:decomposition}, now visible per successor. Proportional elimination
is the first. At $(36,5,4)$ the small blind commits five chips while the big blind
covers only four, so a lost call leaves one chip rather than none;
Eq.~\eqref{eq:icm} prices the loss through a concave function of the stack vector
and cannot see that part of the risk it charges for is unavailable, marking
$(36,1,8)$ down by \$14,050. Configuration-dependent future play is the second. The
fold branch $(36,4,5)$ hands the blind to the big blind and exchanges which short
stack is shortest, and the value of four chips there depends on the blinds that
stack must post next, which Eq.~\eqref{eq:icm} does not read. The hit geometry
matches: 53 of the 57 states have a big blind shorter than the small blind and the
remaining four have them equal, never the reverse, with the big blind holding 3 to 8
chips against a blind of 2 and the button holding 21 to 39 of the 45.

The ratio is the decisive quantity. The certified contrast is a difference of two
terms of order \$87,500 and has median magnitude \$2,548, while ICM's error on a
single successor is about \$12,500. What decides jam or fold is an order of magnitude
smaller than the pricing error, which is why these cells exist.

\paragraph{Verification.}
The closed-form minima are checked for soundness and tightness against the shipped
evaluator: for randomly drawn opponent profiles the margin \texttt{fp3.eval\_state}
returns never falls below $\underline{m}^{\scont}$, and at the vertex the closed form
names as worst it reproduces that value. The largest witnesses were then re-derived
from the evaluator alone with the closed forms discarded. At $(36,5,4)$ holding 64o,
where $\pi^{\icm}$ jams with probability 0.106, the evaluator returns $+\$123{,}323$
against an all-folding big blind, $+\$72{,}263$ per unit of reach against a
half-jamming one, and $+\$21{,}203$ at the corner where the big blind calls with its
entire range, matching the certified minimum. The split of
Eq.~\eqref{eq:row2-affine} reproduces the scan's margins with maximum absolute error
\$0.

\subsection{Where the symmetrized table meets ICM}
\label{app:sym-equal}

Section~\ref{sec:setup} observes that at an equal-stack state the three seat values
average to exactly the ICM value, so seat blindness is the only table error there, and
Table~\ref{tab:equal-stack} shows the arithmetic. This subsection states that
coincidence exactly and locates it: it happens at one state out of 946, and nowhere
else. The two ingredients are that the continuation table prices the whole prize pool
and that seat symmetrization is a group average.

\begin{lemma}[Pool conservation]
\label{lem:conservation}
Let $P$ be the prize pool. At every state $s$ the strategic continuation table
satisfies $\sum_{i} C^{\scont}_i(s)=P$, since the tournament ends with the three
payouts awarded exactly once.
\end{lemma}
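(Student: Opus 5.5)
The plan is to prove the identity by induction through the fixed-point structure that defines $C^{\scont}$, using that every terminal payoff distributes the pool exactly. I would argue in three layers.

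\textbf{Terminal outcomes.} When a hand ends the three-player stage, the payouts sum to $P$ by direct inspection. If one player is eliminated, that player receives the third prize $v_3$. The two survivors are priced by the heads-up table, whose entries sum to $v_1+v_2$: each heads-up terminal awards $v_1$ and $v_2$ once. If two players bust in the same hand, the survivor gets $v_1$ and the busted pair get $v_2$ and $v_3$ by the stack tie-break, so again every payout is used once. So every settled terminal branch that leaves the three-player stage carries a payout vector summing to $P$, whatever the chips and policies are.

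\textbf{Propagation through one hand.} For a three-survivor successor $s'$, the lookup $\Pi\,C(s')$ only permutes coordinates under the seat rotation. It therefore preserves the coordinate sum. The one-hand value $U(s;\sigma,C)$ is a convex combination, with reach and policy weights summing to one, of vectors that either sum to $P$ (terminal branches) or equal a rotated copy of $C(s')$. Consequently, if $\sum_i C_i(s')=P$ for all $s'$, then $\sum_i U_i(s;\sigma,C)=P$ for every profile $\sigma$. In other words, the Bellman-type operator $\mathcal{T}$ used in each VI--FP sweep maps the affine set $\mathcal{A}=\{C:\sum_i C_i(s)=P\ \forall s\}$ into itself.

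\textbf{Fixed point.} Since $C^{\scont}$ is defined as the fixed point (equivalently, the tournament value of the frozen stationary profile), it satisfies $C^{\scont}=\mathcal{T}C^{\scont}$. This is the step I expect to be the main obstacle. Closure of $\mathcal{A}$ alone does not show that the fixed point lies in $\mathcal{A}$ unless either the iteration starts in $\mathcal{A}$ or the tournament terminates almost surely. I would handle it the second way. Write $D(s)=\sum_i C^{\scont}_i(s)-P$. The layer-two computation gives $D(s)=\mathbb{E}[D(s_{\text{next}})\mathbf{1}\{\text{three survive}\}]$. Every state reaches an elimination with probability bounded below, since blinds force all-ins in finitely many hands. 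The three-survivor continuation operator is therefore a strict contraction in sup norm, so $D\equiv0$. If the archived table is instead read as an iterate, I would note that all initializations (ICM, uniform, chip-proportional, random normalized) lie in $\mathcal{A}$, and closure finishes the argument. The interpretive remark that the tournament ends with the payouts awarded exactly once is exactly this almost-sure termination.
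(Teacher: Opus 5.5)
Your overall approach is the same as the paper's. The paper justifies the lemma in one clause: every terminating play path awards $v_1,v_2,v_3$ exactly once, so expected shares sum to $P$. You make that precise at the level of the VI--FP fixed point. Your first two layers are correct, and so is the identity $D(s)=\mathbb{E}[D(s_{\text{next}})\mathbf{1}\{\text{three survive}\}]$.

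The gap is in the step you flagged. The claim that ``blinds force all-ins in finitely many hands'' is false in this model, because the blinds stay at $1/2$. Under the all-fold profile the stacks cycle with period three, $(15,15,15)\to(14,16,15)\to(15,16,14)\to(15,15,15)$, and nobody is ever all-in. On such a recurrent no-elimination class, any $D$ that is constant on the cycle satisfies your recursion. So the fixed-point equation alone does not force $D\equiv 0$: termination is a property of the profile, not of the blind structure.

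You need to take the uniform bound from the policies instead:
\begin{itemize}
\item The profiles come from smoothed fictitious play, whose jam probabilities are strictly positive on every hand class; the paper notes that exact zeros do not occur.
\item Hence at every state a called all-in occurs with positive probability, and it busts the covered stack with positive probability.
\item Finiteness of the state space then gives $\max_s\Pr(\text{three survive}\mid s)<1$, which is the contraction modulus your argument needs.
\end{itemize}

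The same repair is needed in your first layer. The heads-up table is itself an undiscounted fixed point, and heads-up all-fold also cycles, with period two. So the claim that its entries sum to $v_1+v_2$ must be proved by this argument first. It does not follow from ``each heads-up terminal awards $v_1$ and $v_2$ once.''

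Your fallback reading does not close the gap either. It assumes every starting table already sums to $P$, including the random starts and the baseline start, which the paper never specifies. It also assumes each sweep is an exactly convex update. The paper reports that the shipped table conserves the pool only to solver tolerance (worst deviation \$2.23), not to machine precision. Closure from a normalized start would give machine precision, so those premises do not describe the archived computation.

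Like the paper's lemma, your argument therefore establishes the identity for the idealized fixed point. The observed \$2.23 is consistent with a residual $D$ that your contraction shrinks only as the solve converges.
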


The shipped table satisfies this to solver tolerance rather than to machine precision:
over all 946 states the worst deviation of $\sum_i C^{\scont}_i$ from the pool is
\$2.23 on a \$1,000,000 pool, or $2.23\times10^{-6}$ in prize-pool units. Analytic ICM
satisfies it to $2.2\times10^{-16}$, being a closed-form probability weighting.

\begin{proposition}[Equal stacks is the coincidence point]
\label{prop:sym-equal}
At the equal-stack state $s_{\mathrm{eq}}=(t,t,t)$ the symmetrized table of
Eq.~\eqref{eq:sym} is seat-independent and equals ICM:
$C^{\sym}_i(s_{\mathrm{eq}})=P/3=C^{\icm}_i(s_{\mathrm{eq}})$ for every seat $i$,
under Lemma~\ref{lem:conservation}.
\end{proposition}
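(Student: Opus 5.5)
Unwinding Eq.~\eqref{eq:sym} at $s_{\mathrm{eq}}=(t,t,t)$ reduces the claim to Lemma~\ref{lem:conservation}. By Appendix~\ref{app:protocol}, the six seat assignments of the multiset $\{t,t,t\}$ are counted with multiplicity. As ordered stack vectors they all coincide with $s_{\mathrm{eq}}$ itself, so every term in the average reads the same stored vector $C^{\scont}(s_{\mathrm{eq}})$. They differ only through the realignment $\Pi_{s'\to s}$, which I will take to be the permutation $\sigma\in S_3$ carrying the seat labels of the assignment onto those of $s$. This gives
\[
C^{\sym}_i(s_{\mathrm{eq}})=\frac{1}{6}\sum_{\sigma\in S_3} C^{\scont}_{\sigma(i)}(s_{\mathrm{eq}}).
\]

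The key step is counting. As $\sigma$ ranges over $S_3$, each seat $j$ occurs as $\sigma(i)$ for exactly $|S_3|/3=2$ permutations. The orbit--stabilizer count applies because $S_3$ acts transitively on the three seats with stabilizers of order two. Hence
\[
C^{\sym}_i(s_{\mathrm{eq}})=\tfrac{2}{6}\sum_{j}C^{\scont}_j(s_{\mathrm{eq}})=\tfrac13\sum_j C^{\scont}_j(s_{\mathrm{eq}}).
\]
This quantity does not depend on $i$, which gives seat independence. Applying Lemma~\ref{lem:conservation} turns the sum into $P$, so the common value is $P/3$.

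For the ICM side, Eq.~\eqref{eq:icm} is a symmetric function of the stack vector, so all three seats receive the same value at $(t,t,t)$. Since $C^{\icm}$ also distributes the full payout vector, $\sum_i C^{\icm}_i=\sum_k v_k=P$, each seat is worth $P/3$. Alternatively, one can note that ICM is a fixed point of the averaging operator and run the same counting argument. Either route gives $C^{\sym}_i(s_{\mathrm{eq}})=C^{\icm}_i(s_{\mathrm{eq}})$.

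The main obstacle is making the multiplicity convention precise. With a repeated chip count, the set $\Sigma(s_{\mathrm{eq}})$ of distinct ordered states has a single element. If the six assignments were collapsed into that one state with the identity realignment, the average would just return $C^{\scont}(s_{\mathrm{eq}})$, which is not seat-blind (Table~\ref{tab:equal-stack}). I would therefore state explicitly that, under the convention of Appendix~\ref{app:protocol}, $\Sigma(s)$ is indexed by permutations in $S_3$ rather than by distinct ordered states, and that $\Pi_{s'\to s}$ is the corresponding seat relabeling. With that convention fixed, the remaining steps are only the transitivity count above.

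A small secondary point is that the shipped table satisfies conservation only to about \$2.23. I would note that in numerical terms the equality $C^{\sym}_i(s_{\mathrm{eq}})=P/3$ then holds within $\$2.23/3$.
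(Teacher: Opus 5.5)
Your proof is correct and follows essentially the same route as the paper. All six assignments realign $s_{\mathrm{eq}}$ to itself, and each seat index is hit exactly twice, so the average is $\tfrac13\sum_j C^{\scont}_j(s_{\mathrm{eq}})=P/3$ by Lemma~\ref{lem:conservation}, while ICM's permutation symmetry gives $P/3$ on the other side. Your remarks on the multiplicity convention and the \$2.23 conservation defect match the paper's discussion in Appendix~\ref{app:sym-equal}.
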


\begin{proof}
Every one of the six seat assignments in $\Sigma(s_{\mathrm{eq}})$ realigns
$s_{\mathrm{eq}}$ to itself, so Eq.~\eqref{eq:sym} averages the six realignments of a
single ordered state. Fix a seat $i$. Across the six permutations the realignment
$\Pi$ sends $i$ to each of the three seat indices exactly twice, so the average is
$\frac{2}{6}\sum_j C^{\scont}_j(s_{\mathrm{eq}})=\frac{1}{3}\sum_j
C^{\scont}_j(s_{\mathrm{eq}})$, which is $P/3$ by Lemma~\ref{lem:conservation} and is
independent of $i$. On the ICM side, Eq.~\eqref{eq:icm} is a symmetric function of the
stack vector, so equal stacks give equal finish-place distributions and each seat
receives $P/3$.
\end{proof}

\begin{remark}[The coincidence is isolated]
\label{rem:sym-separation}
Proposition~\ref{prop:sym-equal} does not generalize. Scanning $C^{\sym}$ against
$C^{\icm}$ over the full 946-state census, the largest per-seat gap at
$(15,15,15)$ is \$0.53, which is one third of the pool-conservation defect at that
state and not a modeling difference. Exactly one state falls within \$3. The nearest
other state, $(15,14,16)$, is already \$669.18 away; no state lies within \$250; the
median gap is \$9,392.68 and the largest is \$16,960.30. So the equal-stack agreement
is a property of the symmetric point, not evidence that a seat-blind strategic table
reduces to ICM, which is what the level term of Eq.~\eqref{eq:decomp} measures.
\end{remark}

Two checks bound how much of this is convention. The symmetrized values depend only on
a seat's own stack together with the multiset of the other two, agreeing across all 484
such classes to $1.7\times10^{-10}$, confirming the construction is seat-blind as
claimed. And the average in Eq.~\eqref{eq:sym} runs over all six assignments with
multiplicity, not over one representative per distinct arrangement. The distinction is
invisible at the 882 states with three different chip counts and matters at the 64
states with a repeated count, where the two readings differ by up to \$22,281.20; at
$(15,15,15)$ the representative reading returns $C^{\scont}$ unchanged and misses
Proposition~\ref{prop:sym-equal} by \$10,360.23.

\subsection{Scope}
Several limitations bound what these results establish.

\emph{The certificate is local.} It bounds one current-hand unilateral deviation, not
coordinated deviations, multi-state deviations, richer action abstractions,
continuation approximation error, or complete-tournament exploitability.

\emph{The dominance certificates are single-set and table-relative.} Appendix~\ref{app:dominance}
certifies one action at one information set against every opponent behavior at the
responding sets of the modeled current hand, with the continuation table fixed. It is
not a statement about coordinated or multi-hand deviations, and it is not
complete-tournament exploitability. The margins are exact within the finite model, and
$C^{\scont}$ remains the benchmark of that model rather than absolute ground truth.
The button's opening row is bounded term by term rather than minimized exactly, so its
certified counts are a conservative lower bound; it contributes no dominated folds.

\emph{Precision is not matched across all rows.} The main $\pi^{\sco}$ arm's worst
one-hand deviation gain is \$364.68 against the fixed-ICM arm's \$250.00. The
matched-tolerance sequence of Section~\ref{sec:scale} is the response, and the
archived $T=60$ strategic solve is looser still at \$999.10, making that depth row a
directional check rather than a precision-matched replication.

\emph{The domain is small by design.} A three-player jam/fold game at 45 chips is
chosen so that the census can be complete and the policies can be frozen and
enumerated exactly. Whether the seat/level split retains these proportions in games
with more players, more streets, or richer action sets is an empirical question this
design cannot answer.

\emph{Two rows hit iteration caps.} In the winner-take-all and flat prize structures,
62 and 17 states respectively reached the fixed-ICM iteration cap, so their worst
deviation gains are \$143.43 and \$558.44 against targets of \$100 and \$400 in each
row's own normalized units. Both rows are directional.

\emph{The opponent scope is real.} Against threshold opponents jamming 10--30\% of
hands the gain is negative. We report the whole family rather than the favorable
members, and the crossing point between 30\% and 40\% is the honest boundary of the
claim.

\emph{The intervals are conditional.} The 120-state model-opponent sample is a fixed
deterministic selection, not a probability sample, and its intervals describe
sensitivity to that sample's composition. Response counts are not sample sizes, the
two models are not Independent and Identically Distributed (IID) draws and are never
pooled, and the 120-state sample is not
uniform over the 946 states.

\emph{The decomposition is path-specific.} Eq.~\eqref{eq:decomp} is an exact
telescoping identity along one symmetrization path: seat information is removed
first, and the level term is the remainder. The two terms describe contributions
along that path. They do not identify independent causal mechanisms, do not
establish threshold sensitivity at any seat, and do not quantify the effect of a
seat-aware ICM variant, which we never construct.

\emph{The optimization boundary is local and continuation-specific.}
Proposition~\ref{prop:boundary} concerns one unilateral change of the focal
player's modeled current-hand policy under fixed opponents and a fixed
continuation. It is not a best response in the complete multi-hand tournament and
does not define multiplayer exploitability. The current-hand optimization gaps of
Table~\ref{tab:share} and the ICM hand-gap column of Table~\ref{tab:anchors} are
local diagnostics in the same sense; neither normalizes the headline matched
effect. Scoring both arms with $C^{\icm}$ exchanges the roles in the proposition,
since $\pi^{\icm}$ is then the $\varepsilon$-optimal policy for the scored game;
the bound gives $\varepsilon$ on the reversed difference, and the observed strict
negative sign is an empirical result consistent with it.


\emph{The payout and depth rows are observed values.} Three prize structures and
three chip depths give the observed matched costs reported in
Section~\ref{sec:scale}. They do not establish a general monotone relationship in
payout flatness or depth, and they do not support extrapolation to contemporary
tournament ladders. The matched-tolerance sequence rules out proportional residual
decay over the tested range without assigning a quantitative share of the baseline
gain to solver precision versus the objective difference.

\section{Continuation, Scorer, and Anchor Robustness}
\label{app:robustness}

Proposition~\ref{prop:boundary} guarantees the sign under the continuation that
defines the strategic one-hand game, and its premise names the lever exactly:
change $C$. This appendix changes it in every way we could construct, and gives
the full results summarized in Section~\ref{sec:census}.

\subsection{Six Continuation Evaluators}

The VI--FP computation admits several near-optimal continuation endpoints. Five
were computed from different initializations --- ICM, uniform, chip-proportional,
and two random --- independently of the table that produced $\pi^{\sco}$. Scoring
the census with each in turn leaves the ordering intact: all five means are
positive on both scopes, within $1.00$ to $1.13$ times the published value, with
the published per-seat ordering and the big blind smallest at all five
(Table~\ref{tab:endpoints}).

\begin{table}[tb]
\centering
\footnotesize
\setlength{\tabcolsep}{3pt}
\begin{tabular}{@{}lrrrr@{}}
\toprule
Continuation & \multicolumn{2}{c}{All 2,838 units} & \multicolumn{2}{c}{Stable 1,314} \\
\cmidrule(lr){2-3}\cmidrule(lr){4-5}
 & Mean (\$) & $>0$ & Mean (\$) & $>0$ \\
\midrule
Baseline & 214.33 & 2{,}433 & 262.80 & 1{,}151 \\
ICM start & 215.28 & 2{,}441 & 264.78 & 1{,}150 \\
Uniform & 213.38 & 2{,}387 & 260.70 & 1{,}115 \\
Chip & 223.45 & 2{,}384 & 270.03 & 1{,}110 \\
Random 101 & 220.93 & 2{,}392 & 268.50 & 1{,}117 \\
Random 202 & 218.85 & 2{,}417 & 266.83 & 1{,}141 \\
\bottomrule
\end{tabular}
\caption{Gains under alternative strategic continuation endpoints, in dollars per
hand. We report the range across endpoint rows.}
\label{tab:endpoints}
\end{table}

A sharper version of the same test gives each endpoint its own yardstick: its
policy supplies the endpoint-specific $\pi^{\sco}$ arm, its own continuation prices the census, and its
own policy supplies the anchor, making each a self-contained computation. All five
self-consistent means are positive --- \$215.73, \$215.40, \$242.80, \$229.98, \$233.15
against \$214.33 --- with strict-positive coverage from 2,100 to 2,418 of 2,838
units and state coverage from 839 to 906 of 946. Relative to each endpoint's
own common current-hand optimum, the fixed-ICM gap ranges from \$380.80 to
\$404.63 and the SCO gap from \$161.83 to \$168.70. The matched ordering holds at
every endpoint, not just the published table.

\subsection{Abandoning Frozen Tables Entirely}
\label{sec:sharedeck}

Every evaluator above is a frozen 169-class payoff table, and all of them inherit
the same abstraction. So we also built an evaluator that shares none of that
machinery: it deals three disjoint hole-card pairs and a five-card board from one
shared 52-card deck, with 10,000 legal deals per state on the 120 confirmatory
states. This removes the independence approximation implicit in class-level
tabulation --- real hands remove cards from each other's ranges.

The gain remains \$171.18 per hand with 96 of 120 states positive, a Monte Carlo
standard error of \$12.18 and a state-cluster bootstrap interval
$[\$131.85,\$211.95]$. Per-seat means are \$272.88, \$118.43, and \$122.18, again
largest at the button. Compared head to head on the same 120 states, the
shared-deck evaluator and the frozen-table evaluator differ by $-\$3.65$ per state
with a standard error of \$12.25, a $t$-ratio of $-0.30$, and a per-state
correlation of $0.814$. The two evaluators agree within noise: dropping the
frozen-table abstraction leaves the result where it was.

\subsection{The One Intervention That Flips the Sign}
\label{sec:variants}

Crossing the choice of scorer with the choice of anchor gives four cells
(Table~\ref{tab:variants}).

\begin{table}[tb]
\centering
\footnotesize
\setlength{\tabcolsep}{3.4pt}
\begin{tabular}{@{}llrrrr@{}}
\toprule
Scorer & Anchor & Mean (\$) & vs.\ base & Units $>0$ & States $>0$ \\
\midrule
$C^{\scont}$ & $\pi^{\sco}$ & 214.33 & $1.00\times$ & 2{,}433 & 903 \\
$C^{\scont}$ & $\pi^{\icm}$ & 115.53 & $0.54\times$ & 1{,}932 & 872 \\
$C^{\icm}$ & $\pi^{\sco}$ & $-82.55$ & $-0.39\times$ & 1{,}158 & 285 \\
$C^{\icm}$ & $\pi^{\icm}$ & $-170.05$ & $-0.79\times$ & 828 & 171 \\
\bottomrule
\end{tabular}
\caption{Scorer $\times$ anchor. Changing the anchor halves the gain but keeps the
sign and the state-level majority. Changing the scorer to analytic ICM reverses it,
consistent with the reversed optimization boundary in Proposition~\ref{prop:boundary}: each policy was constructed to approximately optimize the
continuation objective named by its construction table. The first row reproduces the published census bit for
bit and serves as the implementation control.}
\label{tab:variants}
\end{table}

The two ICM-scored rows are negative because the optimization boundary runs the
other way under $C^{\icm}$: there $\pi^{\icm}$ is the policy that is
$\varepsilon$-optimal for the game being scored. Proposition~\ref{prop:boundary}
applied with the roles exchanged bounds the reversed difference by $\varepsilon$,
and the observed strict negative sign is consistent with that bound. Each arm wins
under its own scorer, which is why the question has to be asked with a common
evaluator, and why the shared-deck evaluator of Appendix~\ref{sec:sharedeck} ---
which belongs to neither arm --- is the load-bearing check.

Replacing the anchor is more interesting. Both arms then face
$\pi^{\icm}$-generated opponents rather than $\pi^{\sco}$-generated ones, and the
gain falls from \$214.33 to \$115.53 while staying positive in 1,932 units and 872 of
946 states. Roughly half the advantage is thus specific to facing strategically
competent opponents: part of what strategic continuation buys is knowing how to
play against opponents who themselves understand positional value. The seat
ordering also rearranges, with the button's gain falling
from \$291.03 to \$65.50 while the small blind's stays near \$216.73. Against
ICM-style opponents the button's positional edge is largely unavailable, because
the opponents are not contesting position in the first place.

\section{Reproducibility}
\label{app:repro}

\subsection{Configuration}
The tabular tournament computations use Python 3.9, NumPy, fixed seeds, and
single-threaded BLAS. The primary configuration is $T=45$, blinds $(1,2)$, payouts
\$750{,}000, \$250{,}000 and \$0 out of a \$1{,}000{,}000 prize pool, 169 hand
classes, 2,000 Monte Carlo deals per hand-class pair, and 1,024 shared three-way
boards. The model-opponent analysis uses state-cluster bootstrap seed 42 and 10,000
replicates. Qwen2.5:32b and Llama3.3:70b are served locally through Ollama with
temperature zero and a 64-token response limit.

\paragraph{Prize units in the released artifacts.} The solves and evaluations were
executed with the 500/300/200k ladder and are reported in the third-place-zero
normalization 750/250/0k of the same pool, so the archived configurations and logs
carry the executed values. The two ladders are
related by the pool-preserving affine map $p\mapsto\alpha p+\beta\mathbf{1}$ with
$\alpha=2.5$ and $\beta=-\$500{,}000$: every equilibrium, every strategy, and every
coverage count is identical, differences of continuation value scale by $\alpha$, and
absolute levels follow the full map. A reproducer reading \texttt{inner\_tol=100} in a
config reconciles it with the \$250 of Appendix~\ref{app:details} by the same factor.
Recomputing the Malmuth--Harville table directly under 750/250/0k reproduces the
affine image of the executed table exactly.

Numerical stability is established over the complete domain rather than on a sample.
The fixed-ICM arm was re-solved on all 946 states at inner tolerances \$500, \$250,
\$125, \$62.50 and \$25. Every state converged at every tolerance with no state reaching
its iteration cap; worst-case inner iterations rise from 752 at \$500 to 18,008 at
\$25 against a cap of 30,000, and the realized worst residual matches the target at
each level to five significant figures. The $\pi^{\sco}$ arm's own solve reports a worst
ex-post one-hand deviation gain of \$364.68 and a mean of \$248.48 over the 946 states,
with simplex sums closing to $2.2\times10^{-6}$.

\subsection{Artifact map}
The release separates compact numerical inputs from recomputable analysis code.
The deterministic chain summary is
\texttt{reference/\allowbreak decision\_chain/\allowbreak decision\_chain.json}.
It records the schema and axis semantics, full-domain value-error summaries,
policy-distance definitions and unrounded values, input shapes and SHA-256 hashes,
state and hand-order hashes, and explicit claim guards: benchmark-relative rather
than absolute truth, not reach-weighted, and no monotonicity or cell-level mediation
claim. Its derivation is offline and deterministic and invokes no solver, LLM, or
Monte Carlo routine.
The main strategic continuation table and policy $\pi^{\sco}$ ship with the complete-census raw
tensors: per-state per-owner gains, the full value array for both arms, the ordered
state stacks, inner-solve iteration counts and residuals, and the fixed-ICM policy
tensor. The seat-symmetrized arm ships with its own value array and decomposition
summary. The shared-deck Monte Carlo evaluator ships with its pilot and confirmatory
results and their aggregates. The offline six-evaluator reanalysis ships as a summary,
since it rescored frozen responses rather than issuing new queries.

The full-domain tolerance record ships as one entry per tolerance, carrying the
iteration cap, the worst and mean inner-iteration counts, and the worst and mean
ex-post residual over all 946 states; the code that assembles it refuses any scan
whose worst state stopped at its cap, so a capped solve cannot be reported as a
converged one. The matched-tolerance sequence ships as one row per common tolerance
with both arms' residuals, and the matched-tolerance rescore of the model responses
ships as a summary with a provenance record naming its two source arms by hash. The
rollout ships its per-unit differences, visit counts, and pairing-identity check. The
bundle does not contain the workspace item files, either model's raw response set, or
the model-query code.

\subsection{Verification}
The decision-chain derivation and the downstream paper-number verification are
independent entry points. The former reconstructs
\texttt{decision\_chain.json} from the frozen SC values, analytic ICM values, and
the two policy tensors, and checks canonical JSON bytes. The latter receives that
artifact through its separate \texttt{--decision-chain} argument and independently
recomputes the matched census, the two current-hand optimization gaps and their
identity, the exact \$130.43/\$83.90
symmetrized decomposition, and 4.38 rollout result from their released downstream
artifacts. Thus the chain file does not copy downstream outcomes into its derivation,
and the downstream verifier does not infer policy cost from value or TV statistics.

A paper-number verifier recomputes the headline census mean, strict-positive unit
coverage, and state-positive coverage directly from the released tensors, and checks
the counts, coverages, leave-out signs, the decomposition residual, endpoint support,
model identities, legal-response rates, conditional intervals, Holm correction,
continuation sensitivities, six-evaluator ordering and spread, the shared-deck
interval and head-to-head difference, full-domain tolerance convergence,
matched-tolerance readout on both the census and the model responses, the payoff-table
noise decomposition, the rollout aggregates, and confirmatory population arithmetic.
Every numerical claim covered by the verifier manifest is compared with an absolute
bound at half of the last digit the paper displays, so a changed covered digit cannot
pass. Claims backed by released raw tensors are recomputed; claims whose release
contains only summaries receive arithmetic, provenance, and cross-file consistency
checks rather than an independent replay of the omitted responses. Two such checks
span files: the offline reanalysis must reproduce the sealed formal baseline exactly,
and both analyses must name the same 120 selected states.

A compact verification consists of running the paper-number script, running the
common-evaluator analysis together with the shared-deck evaluator tests and the
packaged invariant tests, and compiling this document. Verification from the released
bundle alone does not independently replay the 235,924 model decisions, because the
manifest omits the workspace items and raw responses; re-running the local models
requires the corresponding weights and Ollama environment together with those
item-generation assets.

\end{document}